\pdfoutput=1
\documentclass{article}%
\usepackage{arxiv}
\usepackage{times}
\usepackage{natbib}
\usepackage{xcolor}
\usepackage{marvosym}
\usepackage[bottom]{footmisc}
\date{}
\renewcommand{\undertitle}{}
\renewcommand{\headeright}{}
\usepackage{etoolbox}
\makeatletter\patchcmd{\@maketitle}{\vskip 0.4in \@minus 0.1in \center{\@date}   \vskip 0.2in}{\vskip 0.02in}{}{\typeout{PATCH FAILED}}\makeatother
\makeatletter\patchcmd{\@maketitle}{\textsc{\undertitle}\\
    \vskip 0.1in}{}{}{\typeout{PATCH2 FAILED}}\makeatother
\usepackage{amsmath,amsfonts,bm}

\def\eqref#1{equation~\ref{#1}}

\def\1{\bm{1}}

\DeclareMathAlphabet{\mathsfit}{\encodingdefault}{\sfdefault}{m}{sl}
\SetMathAlphabet{\mathsfit}{bold}{\encodingdefault}{\sfdefault}{bx}{n}

\renewcommand{\eqref}[1]{(\ref{#1})}%

\usepackage{hyperref}
\definecolor{linkblue}{RGB}{31,78,140}
\definecolor{linkgreen}{RGB}{58,120,70}
\hypersetup{colorlinks=true,linkcolor=linkgreen,citecolor=linkblue,urlcolor=linkgreen,
  pdftitle={Lucid Dreaming for World Models: Learning to Doubt Imagination and Decide by Trust},
  pdfauthor={Ziqi Wen, Ting Xu, Lianyu Wang, Xian Lin, Yanda Meng, Huazhu Fu, Meng Wang, Ching-Yu Cheng}}
\usepackage{microtype}
\makeatletter\renewcommand{\paragraph}{\@startsection{paragraph}{4}{\z@}{0pt}{-1em}{\normalsize\bfseries}}\makeatother
\usepackage{url}
\usepackage{needspace}
\usepackage{graphicx}
\usepackage{caption}
\usepackage{subcaption}
\usepackage{placeins}
\usepackage{wrapfig}
\makeatletter\def\WFadj{1.1}\patchcmd{\WF@startwrapping}{\advance\WF@size1.1\baselineskip}{\advance\WF@size\WFadj\baselineskip}{}{\errmessage{WFpatch failed}}\makeatother
\newcommand{\needfit}[1]{\par\ifdim\dimexpr\pagegoal-\pagetotal\relax<\dimexpr#1\relax\newpage\fi}
\makeatletter\newcommand{\WFfinish}[1]{\par\ifnum\c@WF@wrappedlines>\z@\vspace{\dimexpr\c@WF@wrappedlines\baselineskip\relax}\fi\WFclear}\makeatother
 
\usepackage{booktabs}
\usepackage{amsmath,amssymb}
\usepackage{amsthm}
\usepackage{placeins}
\usepackage{float}

\newtheoremstyle{compact}{2pt}{2pt}{}{}{\bfseries}{.}{ }{}
\theoremstyle{compact}%
\newtheorem{theorem}{Theorem}
\newtheorem{proposition}{Proposition}
\newtheorem{corollary}{Corollary}
\newtheorem{lemma}{Lemma}

\usepackage[most]{tcolorbox}
\usepackage{multirow}
\usepackage{xltabular}
\usepackage{colortbl}%
\newtcolorbox{keybox}{enhanced,colback=black!5,colframe=black!28,boxrule=0.4pt,arc=1.5pt,left=7.6pt,right=7.6pt,top=5pt,bottom=5pt,boxsep=0pt,before skip=12.0pt plus 2pt,after skip=10.0pt plus 2pt,fontupper=\normalsize}
\definecolor{lucid}{RGB}{122,60,110}
\definecolor{lucidlight}{RGB}{244,236,242}
\definecolor{dirok}{RGB}{214,229,217}%
\definecolor{dirbad}{RGB}{243,218,214}%
\definecolor{wrongway}{RGB}{230,230,230}%
\definecolor{defgreen}{RGB}{58,120,70}
\definecolor{defgreenlight}{RGB}{238,246,238}
\newtcolorbox{defbox}{enhanced,breakable,colback=defgreenlight,colframe=defgreen!50,boxrule=0.4pt,arc=1.5pt,left=5pt,right=5pt,top=3pt,bottom=3pt,boxsep=1pt,before skip=6pt,after skip=6pt}
\tcbset{appstatement/.style={enhanced,colframe=black!28,boxrule=0.4pt,arc=1.5pt,
    left=7.6pt,right=7.6pt,top=5pt,bottom=5pt,boxsep=0pt,
    before skip=12pt plus 2pt,after skip=4pt}}
\newcommand{\appendixtypography}{%
  \renewtcolorbox{keybox}{appstatement,colback=black!5}%
  \tcolorboxenvironment{lemma}{appstatement,colback=white}%
  \tcolorboxenvironment{proof}{blanker,breakable,left=8pt,
    borderline west={0.8pt}{0pt}{black!38},
    before skip=4pt,after skip=10pt plus 2pt,
    before upper={\setlength{\parskip}{4pt plus 1pt}}}%
  \renewtcolorbox{defbox}{appstatement,colback=defgreenlight,colframe=defgreen!50,before skip=10pt plus 2pt,after skip=8pt}%
}
\makeatletter\g@addto@macro\appendix{\appendixtypography}\makeatother
\usepackage{etoc}
\definecolor{tocgray}{RGB}{118,118,118}
\newcommand{\apptocgroup}[1]{}
\newcommand{\appgroup}[1]{\addtocontents{toc}{\protect\apptocgroup{#1}}}
\newcommand{\apptocdots}[1]{\nobreak\leaders\hbox to 0.62em{\hss\textcolor{#1}{.}\hss}\hfill\nobreak}
\newcommand{\appendixcontents}{%
  \begingroup
  \hypersetup{linkcolor=black}\setlength{\parskip}{0pt}%
  \etocsettagdepth{main}{none}\etocsettagdepth{app}{subsection}%
  \renewcommand{\apptocgroup}[1]{\par\addvspace{12pt}\noindent
    {\footnotesize\scshape\textcolor{lucid}{##1}}\hspace{0.7em}%
    \textcolor{lucid!30}{\leaders\hrule height 2.9pt depth -2.5pt\hfill}\par\nobreak\vspace{3pt}}%
  \etocsettocstyle{%
    \noindent{\large\scshape Appendix}\hfill\raisebox{0.5pt}{\footnotesize\scshape\textcolor{tocgray}{page}}\par
    \vspace{3pt}{\color{lucid}\hrule height 0.8pt}\vspace{1pt}}%
    {\par\vspace{7pt}{\color{lucid!40}\hrule height 0.4pt}\vspace{14pt}}%
  \etocsetstyle{section}{}{\par\addvspace{6pt}}%
    {\noindent\makebox[1.8em][l]{\textcolor{lucid}{\bfseries\etocthenumber}}\etoclink{\scshape\etocname}%
     \apptocdots{black!25}\makebox[1.6em][r]{\textcolor{lucid}{\etocpage}}\par}{}%
  \etocsetstyle{subsection}{}{\par\addvspace{2.2pt}}%
    {\noindent\hangindent=4.3em\hangafter=1\hspace*{1.8em}\makebox[2.5em][l]{\textcolor{tocgray}{\etocthenumber}}\etoclink{\etocname}%
     \apptocdots{black!18}\makebox[1.6em][r]{\textcolor{tocgray}{\etocpage}}\par}{}%
  \tableofcontents
  \endgroup\clearpage}

\newtcolorbox{exbox}{appstatement,colback=defgreenlight,colframe=defgreen!50,before skip=12pt plus 2pt,after skip=9pt plus 2pt}
\newcommand{\lu}[1]{{\setlength{\fboxsep}{1pt}\colorbox{lucidlight}{\textcolor{lucid}{$#1$}}}}%
\newcommand{\lucidline}[1]{{\setlength{\fboxsep}{2pt}\colorbox{lucidlight}{$\displaystyle #1$}}}%
\newcommand{\rowlab}[2]{\text{\normalsize\textcolor{#1}{\textsc{#2}}}}

\title{Lucid Dreaming for World Models: Learning to Doubt Imagination and Decide by Trust}

\author{%
Ziqi Wen$^{1,2}$ \quad Ting Xu$^{1,2}$ \quad Lianyu Wang$^{1,2}$ \quad Xian Lin$^{1,2}$ \quad Yanda Meng$^{3}$\\
\bfseries Huazhu Fu$^{4}$ \quad Meng Wang$^{1,2,\text{\Letter}}$ \quad Ching-Yu Cheng$^{1,2,5,6}$\\[6pt]
\normalfont\fontsize{6.71}{8}\selectfont $^{1}$Centre for Innovation and Precision Eye Health, Yong Loo Lin School of Medicine, National University of Singapore, Singapore 119228, Singapore\\
\normalfont\fontsize{6.71}{8}\selectfont $^{2}$Department of Ophthalmology, Yong Loo Lin School of Medicine, National University of Singapore, Singapore 119228, Singapore\\
\normalfont\fontsize{6.71}{8}\selectfont $^{3}$Bioengineering Program, Biomedical Sciences Division (BioMed), King Abdullah University of Science and Technology (KAUST), Thuwal, Saudi Arabia\\
\normalfont\fontsize{6.71}{8}\selectfont $^{4}$Institute of Advanced Intelligence and Computing (IAIC), Agency for Science, Technology and Research (A*STAR), Singapore 138632, Singapore\\
\normalfont\fontsize{6.71}{8}\selectfont $^{5}$Singapore Eye Research Institute, Singapore National Eye Centre, Singapore 169856, Singapore\\
\normalfont\fontsize{6.71}{8}\selectfont $^{6}$Ophthalmology \& Visual Sciences Academic Clinical Program (EYE ACP), Duke-NUS Medical School, Singapore 169856, Singapore
}

\AtBeginDocument{\newgeometry{textheight=9in,textwidth=6.00in,top=1in,headheight=14pt,headsep=25pt,footskip=30pt}}
\begin{document}
\etocdepthtag.toc{main}

\maketitle
{\renewcommand{\thefootnote}{}\footnotetext{\Letter~Corresponding author.}}

\begin{figure}[H] 
\centering
  \includegraphics[width=0.9\linewidth]{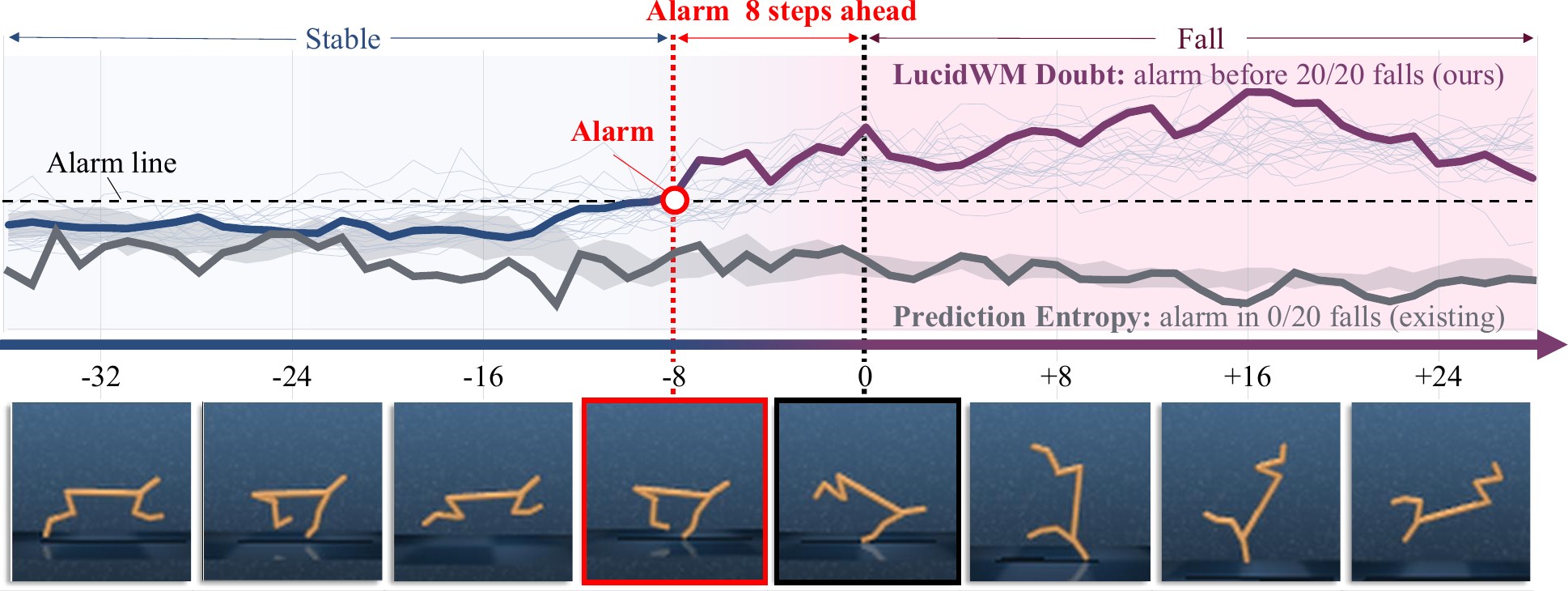}
  \captionsetup{width=0.9\linewidth}
  \captionof{figure}{\textbf{Our world model with doubt.} The doubt is read from the same head that generates its imagination, with no added parameter. The dark line takes one real fall as an example: the model reports its doubt eight steps before the fall and stays above its alarm line throughout it. We show that it reports all twenty consecutive real falls in the same way, every one before the fall begins. The usual readout, the entropy of the prediction, reports none of them.}
  \label{fig:teaser}
\end{figure}
\vspace{2pt}
\begin{abstract}
World models enable agents to learn and plan in imagination, but predictions beyond their experience can become unreliable and mislead decisions. Existing uncertainty estimates derived from predictions can remain overconfident on unfamiliar state-action pairs. We propose the Lucid World Model (LucidWM), which learns doubt from experience and propagates trust through imagination. By integrating Subjective Logic into categorical latent transitions, LucidWM distinguishes predicted outcomes from their evidential support and assigns each transition a degree of doubt. The complement of this doubt defines transition-level trust, which accumulates multiplicatively along imagined trajectories to reweight returns for policy learning and guide action selection. Uncertainty estimation requires no additional parameters or forward passes. Evaluated on four base world models against seventeen uncertainty readouts, LucidWM detects environmental changes and signals uncertainty during action-corrupted rollouts. 
In a controlled navigation case study, acting on trust reduces the number of steps required to reach the goal from 362 to 190. Fifteen demonstration videos show how LucidWM doubts its dreams and acts on that doubt. Videos are available at \mbox{\url{https://lucidwm.github.io}}.
\end{abstract}

\section{Introduction}

A world model is learned from experience and predicts how the world responds to an agent's actions. With it, the agent can dream: it feeds each predicted state back into the model and imagines future trajectories without further interaction with the environment \citep{hafner2025dreamerv3}. These trajectories support behaviour learning and planning.
Such imagination is especially valuable when real trials are costly \citep{wu2022daydreamer}, unsafe \citep{gao2024vista} or impossible to repeat under identical conditions \citep{xu2026meddreamer}. Yet it is often needed to evaluate actions and states beyond the agent's experience, precisely where its world model is least reliable \citep{yu2020mopo,kidambi2020morel}. Without sufficient experiential support, the model must extrapolate, and prediction errors can compound as each imagined state becomes the input to the next transition \citep{janner2019mbpo,asadi2018lipschitz}.

An unreliable dream is dangerous when believed. The agent may learn behaviours that exploit model errors and plan towards states that are implausible in the real environment \citep{kurutach2018metrpo}. Existing methods use uncertainty estimates to propagate predictive uncertainty or penalise unreliable model-generated transitions \citep{chua2018pets,yu2020mopo}.
Common signals include the entropy or variance of a predictive distribution and disagreement among ensemble members \citep{sekar2020plan2explore}.
However, a sharp prediction or agreement among models does not, by itself, establish that a queried transition is supported by experience. In our evaluations, these signals can remain insensitive to unfamiliar inputs or even become more confident as prediction reliability deteriorates (Fig.~\ref{fig:teaser}, Table~\ref{tab:main}).

\def\WFadj{0.10}
\begin{wrapfigure}{r}{0.552\linewidth}
  \vspace{-13.5pt}
  \centering
  \includegraphics[width=\linewidth]{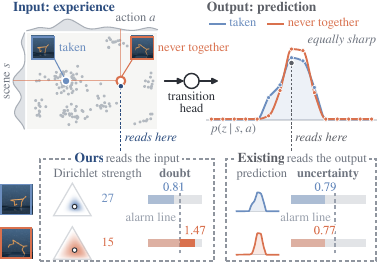}
  \caption{\textbf{Same prediction, different experience.} Top: a scene and an action never experienced together are predicted as sharply as a pair that was (real frames: at a fall's onset, and 30 steps before it). Bottom: ours reads the Dirichlet behind the input, whose strength and doubt separate the two; the entropy of the output cannot. Bars: each readout over its own alarm line.}
  \label{fig:story}
  \vspace{-8pt}
\end{wrapfigure}
We trace these failures to two distinct limitations. \textbf{(i) Doubt concerns support for the input, not just the shape of the output.} A state and an action may each be familiar while their combination is unsupported by observed transitions. The model can still produce a sharp prediction \citep{hein2019relu}, so a readout of predictive spread may report confidence despite limited evidence for that transition (Fig.~\ref{fig:story}). \textbf{(ii) Trust depends on the rollout, not just the current step.} Each imagined state inherits uncertainty from the transitions that produced it. A prediction should therefore be trusted only to the extent that its preceding trajectory is supported, back to the observed starting state. A pointwise uncertainty score alone does not capture this dependence \citep{berger2026biased}.

We propose to learn uncertainty from the evidence that experience provides for a transition, rather than infer it solely from the resulting prediction. The world model can then dream lucidly: just as a lucid dreamer knows a dream from waking life, it distinguishes what it predicts from the evidence supporting that prediction. We use the existing transition-head logits to parameterise a Dirichlet distribution over categorical next-state probabilities. Its mean gives the prediction, its concentration above the prior represents learned evidential strength, and the prior's share of the total concentration defines the step's doubt \citep{josang2016subjective}. Our training objective ties evidence to observed transitions and discourages unsupported evidence. During imagination, we accumulate the complements of these doubts multiplicatively into trajectory trust. This trust governs how strongly the agent relies on imagined continuations when learning and deciding.

\begin{figure}[t]
  \centering
  \includegraphics[width=\linewidth]{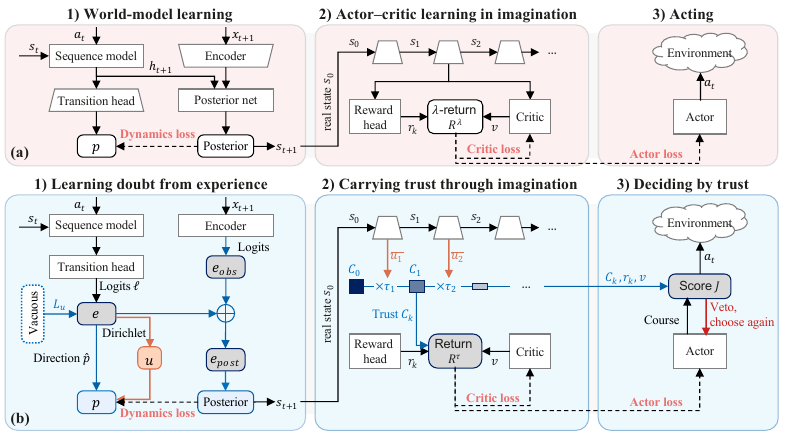}
  \caption{\textbf{Doubt is learned from experience and carried as trust.} (a)~A standard world model; (b)~LucidWM, changes in grey and blue, doubt in orange. 1)~The head's evidence $e$ yields a doubt $u$ in place of the fixed $\varepsilon$; the discipline $\mathcal L_u$ lets evidence recede toward the vacuous opinion where the dynamics loss does not hold it up. 2)~Doubt sets each imagined step's trust $\tau_k$, compounded into $C_k$; $R^{\tau}$ keeps $\lambda\tau_k$ of each step where $R^{\lambda}$ keeps $\lambda$. 3)~A course whose trust collapses is vetoed.}
  \label{fig:method}
\end{figure}
We realise this as the Lucid World Model (LucidWM), which integrates doubt and trust into three stages (Fig.~\ref{fig:method}). \textbf{(1) Learning doubt from experience.} We formulate the categorical transition head as a Subjective Logic (SL) opinion, with evidence measured above the Dirichlet prior, and update the posterior by fusing evidence from observation. \textbf{(2) Carrying trust through imagination.} We incorporate trust into the standard $\lambda$-return, reducing reliance on uncertain imagined continuations. Setting trust to one recovers the standard return. \textbf{(3) Deciding by trust.} Before acting, we evaluate the doubt associated with candidate actions and score their imagined futures using trust. LucidWM is designed for world models with categorical latent states and requires no additional learnable parameters or forward passes for uncertainty estimation.

Our contributions are threefold. \textbf{(i)} We identify how common predictive readouts can miss unsupported transitions and fail to capture cumulative uncertainty along imagined trajectories. \textbf{(ii)} We propose LucidWM, integrating evidential doubt and cumulative trust into world model learning, imagination and decision. \textbf{(iii)} Across four world model backbones and seventeen uncertainty readouts, our results demonstrate improved detection of environment changes after observation and rollout drift before decision making. In the reported fall and goal-reaching experiments, LucidWM raises an alarm before each of 20 evaluated falls and reaches the goal in 190 steps, compared with 362 for the base model. 

\section{Preliminaries and Related Work}\label{sec:prelim}

\textbf{World models.}
A world model \citep{ha2018worldmodels,hafner2019planet} has two parts, an encoder that compresses each observation $x_t$ into a latent state $s_t$, and a transition that predicts the next state from $s_t$ and the action $a_t$. Most recent models make the latent discrete, whether the transition is recurrent or a transformer \citep{hafner2025dreamerv3,morihira2026rdreamer,burchi2025emerald,zhang2026ocstorm}. Its stochastic part is $G$ categorical variables of $K$ classes, and the transition predicts each, $z_{t+1}$, with a categorical head of $K$ logits $\ell(s_t,a_t)$, a softmax with a small uniform mass mixed in so that no class is ever ruled out \citep{hafner2025dreamerv3}:
\begin{equation}
  p(z_{t+1}\mid s_t,a_t)\;=\;(1-\varepsilon)\,\mathrm{softmax}\big(\ell(s_t,a_t)\big)\;+\;\varepsilon/K,
  \label{eq:unimix}
\end{equation}
where $\varepsilon$ is a fixed constant, the same at every state. In training, a second head, the posterior $q(z_{t+1}\mid s_t,a_t,x_{t+1})$, reads the observation that actually arrived, and the transition head is trained to match it. Imagination samples from $p$ for $H$ steps, feeding the sampled state back as the next input, with a reward head supplying $r_t$; an actor $\pi$ and a critic $v$, which estimates the value of a state, are then trained on the imagined trajectory with the $\lambda$-return \citep{hafner2025dreamerv3,micheli2023iris}:
\begin{equation}
  R^{\lambda}_t\;=\;r_t+\gamma\big[(1-\lambda)\,v(s_{t+1})+\lambda\,R^{\lambda}_{t+1}\big],\qquad R^{\lambda}_{H}=v(s_{H}),
  \label{eq:lambda}
\end{equation}
where $\gamma$ is the discount and the fixed $\lambda$ sets, at every step, how far the return continues into the imagined future rather than falling back on the critic (we omit DreamerV3's continuation flag).

\textbf{Subjective Logic.}
SL \citep{josang2016subjective} extends probability by recording how much evidence a belief rests on, so that not knowing is told apart from knowing that an outcome is unlikely. A belief over $K$ outcomes, an \emph{opinion}, is a Dirichlet over the outcome probabilities whose concentration adds the evidence $e_i\ge0$ observed for each outcome $i$ to a prior of weight $W$ spread by a base rate, uniform throughout this paper. With $S=\sum_i e_i$ the total evidence and $\hat p=e/S$ its direction, the concentration, the uncertainty mass and the mean are:
\begin{equation}
  \alpha_i=e_i+W/K,\qquad u=W/(W+S),\qquad P_i=\alpha_i/(W+S)=(1-u)\,\hat p_i+u/K.
  \label{eq:opinion}
\end{equation}
The mean $P$ is the probability the opinion offers to a decision, and the uncertainty mass $u$, the prior's share of the Dirichlet's strength $W+S$, is one before any evidence and falls as evidence accumulates. We observe that $P$ has exactly the form of Eq.~\eqref{eq:unimix}, with the fixed $\varepsilon$ replaced by $u$.

\textbf{Uncertainty in world models.}
Current world models draw their uncertainty from three sources. The first is the prediction of a single model, read as the entropy or variance of the predicted distribution, its top probability, or the likelihood it gave to the observation that then arrived \citep{malik2019calibrated,savov2025autoexplore}. The second is a head fitted on the model's frozen features, scoring a state by the distance of its features from those seen in training \citep{lee2018mahalanobis} or by how poorly a trained network predicts a fixed random network's output on them \citep{burda2019rnd}; it sees the state, not the action. The third is disagreement among several models trained on the same data \citep{lakshminarayanan2017ensembles}, used as an exploration reward in Plan2Explore \citep{sekar2020plan2explore}, as a signal to halt imagination in MOReL \citep{kidambi2020morel}, and to weight, mask or truncate imagined rollouts \citep{buckman2018steve,pan2020m2ac,frauenknecht2024macura}. All three are read off what the model produces. We read uncertainty from the evidence behind each imagined step instead. An evidential head reads such evidence for a single prediction \citep{sensoy2018edl,amini2020der}; a world model's prediction is its next input, so its doubt must be carried along the rollout, fused with the observation that follows, and allowed to change what the agent learns.

\section{Method}\label{sec:method}
A standard world model pins two constants: its head, Eq.~\eqref{eq:unimix}, mixes the same $\varepsilon$ into every prediction, and its return, Eq.~\eqref{eq:lambda}, keeps the same share $\lambda$ of every imagined step. We unfold both into quantities read from the head's own logits, the \emph{doubt} of a transition and the \emph{trust} in a step, and follow them through learning (Sec.~\ref{sec:doubt}), imagination (Sec.~\ref{sec:trust}) and decision (Sec.~\ref{sec:decide}), as Fig.~\ref{fig:method} summarises.

\subsection{Learning doubt from experience}\label{sec:doubt}
\paragraph{Reading doubt from the transition head.}
The transition head already computes what we need: its logits, read as the evidence of an opinion. For each categorical variable of the latent, the head's $K$ logits are read as $e=\mathrm{softplus}\,\ell(s_t,a_t)$, a nonnegative amount of evidence per class. By Eq.~\eqref{eq:opinion}, their total $S$ is the experience the head brings to the scene and action it is asked about, $\hat p=e/S$ is its direction, and $u=\max\{W/(W+S),\,\varepsilon\}$ is the \emph{doubt} of the transition, the share of the prediction that no experience supports, floored at the standard head's $\varepsilon$ so that the lucid head never trusts a transition more than the standard head does. The prediction is the direction of the evidence; the doubt is its amount. Our head spreads that share uniformly over the classes:
\begin{align}
  \rowlab{black!55}{standard head}\;\; & p(z_{t+1}\mid s_t,a_t)=(1-\varepsilon)\,\mathrm{softmax}\big(\ell(s_t,a_t)\big)+\varepsilon/K, \tag{\ref{eq:unimix}}\\
  \rowlab{lucid}{our lucid head}\;\; & \lucidline{p(z_{t+1}\mid s_t,a_t)=(1-\lu{u(s_t,a_t)})\,\hat p(z_{t+1}\mid s_t,a_t)+\lu{u(s_t,a_t)}/K.} \label{eq:head}
\end{align}
\begin{keybox}
\begin{proposition}[Standard heads are constant-doubt opinions]\label{thm:two}
Any head of the form Eq.~\eqref{eq:unimix} is an instance of Eq.~\eqref{eq:head} with direction $\mathrm{softmax}(\ell)$ and doubt pinned at $u\equiv\varepsilon$; equivalently, through Eq.~\eqref{eq:opinion}, it asserts the same evidence total $S\equiv W(1-\varepsilon)/\varepsilon$ for every transition. For example, with DreamerV3's $\varepsilon=0.01$ and $W=2$, $S=198$ (App.~\ref{app:dirichlet}).
\end{proposition}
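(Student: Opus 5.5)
The proof is a direct matching of forms. Eq.~\eqref{eq:head} is parameterised by a direction $\hat p$ (a point of the simplex) and a doubt $u\in[0,1]$. A standard head, Eq.~\eqref{eq:unimix}, is the same affine mixture of a simplex point and the uniform distribution. I would therefore set $\hat p:=\mathrm{softmax}(\ell(s_t,a_t))$ and $u(s_t,a_t):=\varepsilon$ for all $(s_t,a_t)$, and check that both right-hand sides coincide term by term. Two small points need checking. First, $\mathrm{softmax}(\ell)$ is a valid direction: its entries are positive and sum to one. Second, the constant choice $u\equiv\varepsilon$ is compatible with the floor $u=\max\{W/(W+S),\varepsilon\}$ used for the lucid head, because it sits exactly at the floor.

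For the evidence reformulation, I would go through Eq.~\eqref{eq:opinion}, where $u=W/(W+S)$. Inverting this map gives $S=W(1-u)/u$. The map $S\mapsto W/(W+S)$ is strictly decreasing from $(0,\infty)$ onto $(0,1)$, so the inversion is a bijection and the direction of implication is clean. Substituting $u=\varepsilon$ yields $S\equiv W(1-\varepsilon)/\varepsilon$, independent of $(s_t,a_t)$. To make the opinion explicit, I would exhibit evidence $e_i=S\,\hat p_i=S\,\mathrm{softmax}(\ell)_i$. Then $e/S=\hat p$, and the mean in Eq.~\eqref{eq:opinion} is $(1-\varepsilon)\mathrm{softmax}(\ell)+\varepsilon/K$, which recovers Eq.~\eqref{eq:unimix} exactly. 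This also shows that the representation is unique given $u$: $P$ together with $u$ determines $\hat p=(P-u/K)/(1-u)$ whenever $u<1$.

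The numerical example is then arithmetic: $S=2\cdot 0.99/0.01=198$.

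The only real obstacle is interpretive rather than technical. The standard head's "evidence" is not literally $\mathrm{softplus}(\ell)$, so the statement must be read as the existence of an opinion whose mean and uncertainty mass reproduce the head. I would state explicitly that we construct $e$ from $(\hat p,S)$ as above rather than reading it from the logits. I would also note that $\varepsilon\in(0,1)$ is needed for $S$ to be finite and positive.
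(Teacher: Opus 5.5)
Your proposal is correct and takes the same route as the paper's proof. Both identify $\hat p=\mathrm{softmax}(\ell)$ and $u=\varepsilon$ in Eq.~\eqref{eq:head}, then invert $u=W/(W+S)$ to obtain the constant total $S_\varepsilon=W(1-\varepsilon)/\varepsilon=198$; your explicit evidence vector $e=S_\varepsilon\,\mathrm{softmax}(\ell)$ and your note that this evidence is constructed rather than read off as $\mathrm{softplus}\,\ell$ spell out what the paper leaves implicit.
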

\end{keybox}
A standard world model thus assigns every transition, including those it has never experienced, the same amount of evidence. Ours lets the count vary with the transition: prediction and doubt become separate quantities, $\hat p$ can be peaked while $S$ is small, and only $S$ sets $u$. The head predicts as before and reports its doubt at every imagined step from the same forward pass. Read alone, $S$ is only a name for a total; what makes it track experience is how the head is trained, to which we turn next.

\paragraph{Binding evidence to experience.}
Evidence should grow with experience and with nothing else. The dynamics loss, which trains the transition head toward the posterior, does not enforce this: it decides how the evidence for a transition is allocated across classes, and has zero gradient in $u$ over a whole interval of doubts (App.~\ref{app:blind}), so the total evidence is free. We close the gap with a regulariser, the \emph{evidence discipline}: a pull toward the vacuous opinion, the opinion with no evidence and $u=1$. Evidential classifiers use such a pull to shed evidence the labels do not support \citep{sensoy2018edl}; here its job is different: with the fit blind to the total, the discipline alone sets it:
\begin{equation}
  \mathcal L_u=\beta_u\sum_{g}\mathrm{KL}\Big(\mathrm{Dir}\big(e^{g}(s_t,a_t)+(W/K)\,\mathbf 1\big)\,\Big\|\,\mathrm{Dir}\big((W/K)\,\mathbf 1\big)\Big),
  \label{eq:discipline}
\end{equation}
where $g$ runs over the $G$ categorical variables and $\beta_u$ weights the term. The dynamics loss now holds evidence up only on the scene--action pairs the agent has taken, and the discipline lets it recede wherever nothing holds it up. On the training distribution the head settles at the most doubtful opinion consistent with the fit (App.~\ref{app:track}); beyond it the doubt rises, as Sec.~\ref{sec:experience} shows.
\paragraph{Fusing observation into prediction.}
What the model observes should reduce its doubt and never raise it. In a standard world model the posterior is a separate network that reads $(s_t,a_t,x_{t+1})$ and replaces the prediction outright: what the prediction knew is discarded, and what the observation adds is never counted. Our posterior instead fuses the observation into the prediction with a fusion operator $\oplus$ of SL \citep{han2023tpami,xu2024rcml}; we use cumulative fusion, which adds evidence \citep{josang2016subjective} (App.~\ref{app:fusion}). A linear layer, which replaces the posterior network, reads $x_{t+1}$ alone and reports observation evidence $e_{\mathrm{obs}}(x_{t+1})$; this keeps the two sources independent, so no evidence is counted twice:
\begin{equation}
  e_{\mathrm{post}}(s_t,a_t,x_{t+1})=e(s_t,a_t)\oplus e_{\mathrm{obs}}(x_{t+1}).
  \label{eq:fusion}
\end{equation}
The posterior is the mean of the fused opinion, Eq.~\eqref{eq:opinion} with $e_{\mathrm{post}}$, from which $z_{t+1}$ is drawn in place of $q$. Fusion never raises the doubt, and brings two more properties: a vacuous observation leaves the state unchanged, and an informative one moves the state toward the observation by exactly the share of doubt it removes. It is also where experience enters the head: the dynamics loss pulls the prediction toward this posterior, whose sharpness bounds the doubt the prediction may keep (App.~\ref{app:interval}), so evidence accrues where observations have sharpened the posterior.
\subsection{Carrying trust through imagination}\label{sec:trust}
\paragraph{Carrying doubt forward as trust.}
The trust in an imagined step is inherited: each imagined state is the model's own prediction, so a step can be trusted no further than its input. One might carry it in the state, by deducing each imagined opinion from the doubtful one before it; but that recursion settles after one step at a level set by the current step alone, and the lineage is lost (App.~\ref{app:lineage}). We carry it in the return instead, and show that it then compounds as the trust discounting of SL (App.~\ref{app:discount}). The standard return already holds a trust of a kind: at every step it keeps a fixed share $\lambda$ of the imagined future and hands the rest to the critic. We let that share follow the doubt. Let $\bar u_{t+1}$ be the doubt of the transition $(s_t,a_t)$, averaged over the $G$ variables, and $\tau_{t+1}=(1-\bar u_{t+1})/(1-\varepsilon)$ its trust, equal to one when the doubt sits on its floor $\varepsilon$; the share kept beyond step $t$ becomes $\lambda\tau_{t+1}$:
\begin{align}
  \rowlab{black!55}{standard return}\;\; & R^{\lambda}_t=r_t+\gamma\big[(1-\lambda)\,v(s_{t+1})+\lambda\,R^{\lambda}_{t+1}\big], \tag{\ref{eq:lambda}}\\
  \rowlab{lucid}{our lucid return}\;\; & \lucidline{R^{\tau}_t=r_t+\gamma\big[(1-\lambda\lu{\tau_{t+1}})\,v(s_{t+1})+\lambda\lu{\tau_{t+1}}\,R^{\tau}_{t+1}\big],} \label{eq:return}
\end{align}
where $R^{\tau}_t$ is our trust-weighted return, with the same terminal value $R^{\tau}_H=v(s_H)$. A step with high doubt passes less of the imagined future to the return and more to the critic's value at that step.
\begin{keybox}
\begin{corollary}\label{cor:return}
The $\lambda$-return Eq.~\eqref{eq:lambda} is Eq.~\eqref{eq:return} with constant trust, $\tau\equiv1$. With Prop.~\ref{thm:two}, a standard world model is our lucid world model with both constants pinned.
\end{corollary}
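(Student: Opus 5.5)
The plan is direct substitution followed by an induction on the backward recursion. First, set $\tau_{t+1}=1$ for every $t$ in Eq.~\eqref{eq:return}. The two coefficients become $1-\lambda\tau_{t+1}=1-\lambda$ and $\lambda\tau_{t+1}=\lambda$, so each step of the recursion for $R^{\tau}$ coincides term by term with the corresponding step of Eq.~\eqref{eq:lambda}. Both returns share the terminal condition $R^{\tau}_H=R^{\lambda}_H=v(s_H)$.

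A backward induction from $t=H$ down to $t=0$ then gives $R^{\tau}_t=R^{\lambda}_t$ for every $t$, because each side is the same affine function of the same $r_t$, $v(s_{t+1})$ and the already-equal successor returns. I will also note when $\tau\equiv1$ actually occurs. By the definition $\tau_{t+1}=(1-\bar u_{t+1})/(1-\varepsilon)$, we have $\tau\equiv1$ exactly when $\bar u\equiv\varepsilon$. Since each $u\ge\varepsilon$ by the floor, this means every group's doubt sits on its floor.

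For the second sentence, I invoke Prop.~\ref{thm:two}. A standard head Eq.~\eqref{eq:unimix} is the lucid head Eq.~\eqref{eq:head} with direction $\mathrm{softmax}(\ell)$ and doubt pinned at $u\equiv\varepsilon$. Averaging over the $G$ variables then gives $\bar u\equiv\varepsilon$, hence $\tau\equiv1$, and by the first part its return is the $\lambda$-return. So pinning the doubt to $\varepsilon$ recovers both the standard head and, through that same constant, the standard return. This is the statement that a standard world model is the lucid one with both constants pinned: $u$ pinned to $\varepsilon$ in the head, and the kept share pinned to $\lambda$ in the return.

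The main obstacle is interpretive rather than computational. The claim ``standard world model is our lucid world model'' concerns only the head and the return, the two places where the constants appear. It does not concern the posterior or fusion of Eq.~\eqref{eq:fusion}, or the discipline $\mathcal L_u$. I will state explicitly that the identification is of these two pinned components, with the remaining training machinery not claimed to coincide. Within that scope the argument is purely the substitution above.
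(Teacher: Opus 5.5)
Your proposal is correct and is essentially the paper's argument. The first sentence is the termwise substitution $\tau_{t+1}=1$ in Eq.~\eqref{eq:return} with the shared terminal value; the appendix records the same fact through Lemma~\ref{lem:unrolled}, where $\tau\equiv1$ gives $M_n=\lambda^n$ and hence the geometric weights of the $\lambda$-return. The second sentence combines this with Prop.~\ref{thm:two} exactly as you do, via $u\equiv\varepsilon\Rightarrow\bar u\equiv\varepsilon\Rightarrow\tau\equiv1$. Your restriction of the identification to the head and the return, leaving the fused posterior and $\mathcal L_u$ aside, is consistent with Table~\ref{tab:changes}, which lists the posterior as a genuine change.
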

\end{keybox}

\paragraph{Compounding trust along the rollout.}
Trust compounds along the rollout. For a rollout of $H$ steps from a real state $s_0$, write $C_n=\prod_{k=1}^{n}\tau_{k}$, with $C_0=1$, for the trust carried to its $n$-th imagined step. Unrolled from $s_0$, the return shows what this product does:
\begin{equation}
  R^{\tau}_0=\sum_{n=1}^{H-1}\big(\lambda^{n-1}C_{n-1}-\lambda^{n}C_n\big)\,R^{(n)}\;+\;\lambda^{H-1}C_{H-1}\,R^{(H)},
  \label{eq:unrolled}
\end{equation}
where $R^{(n)}=\sum_{k=0}^{n-1}\gamma^{k}r_k+\gamma^{n}v(s_n)$ is the $n$-step return. The weights are nonnegative and sum to one, and the weight of the $n$-step return is the share lost between steps $n-1$ and $n$: one doubtful step lowers the weight of every return beyond it. With $\tau\equiv1$ the weights are the geometric weights $(1-\lambda)\lambda^{n-1}$ of Eq.~\eqref{eq:lambda} (App.~\ref{app:guarantees}). How far imagination counts is thus set by trust, not by a constant: the return reaches as deep as the model's experience reaches, and falls back on the critic beyond it.
\subsection{Deciding by trust}\label{sec:decide}
\paragraph{Deciding what to learn from.}
We train the actor and the critic on $R^{\tau}$ instead of $R^{\lambda}$, so trust decides how much of each imagined step the agent learns from. For this to be safe, reweighting must not change what the critic converges to, and it should make model errors hurt less. Both hold:
\par\penalty-200
\begin{keybox}
\begin{theorem}[Contraction and bias bound]\label{thm:guar}
For any fixed doubt profile: (i) the critic's update under Eq.~\eqref{eq:return} is a $\gamma$-contraction to $\hat V^{\pi}$, the value of $\pi$ in the imagined model, as under Eq.~\eqref{eq:lambda}; (ii) with the critic at the true value $V^{\pi}$ and $\delta_j$ the model's one-step error at imagined depth $j$ (App.~\ref{app:guarantees}):
\begin{equation}
  \big\lVert\,\mathbb E[R^{\tau}_0]-V^{\pi}\big\rVert_\infty\;\le\;\sum_{j=0}^{H-1}\gamma^{\,j}\lambda^{j}\,C_j\,\delta_j.
  \label{eq:bias}
\end{equation}
\end{theorem}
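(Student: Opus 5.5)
For part (i), I will treat the lucid return Eq.~\eqref{eq:return} as a Bellman-type operator on critics. Fix the doubt profile, so that every $\tau_k\in[0,1]$ (guaranteed by $u\ge\varepsilon$) is a fixed function of the imagined transition. The operator is
\[
(\mathcal T v)(s_0)=\mathbb E\big[R^{\tau}_0\big],
\]
where the expectation is over imagined trajectories under $\pi$ and the model, with $v$ plugged into every bootstrap. The unrolled form Eq.~\eqref{eq:unrolled} writes this as a convex combination of $n$-step returns $R^{(n)}$: the weights $w_n$ are nonnegative because each $\lambda\tau_k\le1$, and they sum to one by telescoping.

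Each $n$-step operator $v\mapsto\mathbb E[R^{(n)}]$ is a $\gamma^n$-contraction in $\lVert\cdot\rVert_\infty$, since the rewards cancel in a difference and only $\gamma^n(v-v')(s_n)$ remains. Hence
\[
\lVert\mathcal T v-\mathcal T v'\rVert_\infty\le\sum_n w_n\gamma^n\lVert v-v'\rVert_\infty\le\gamma\lVert v-v'\rVert_\infty,
\]
using $n\ge1$. One care point is that the weights are random, depending on the trajectory. The bound should therefore be done pathwise inside the expectation: $|\sum_n w_n\gamma^n\Delta(s_n)|\le\gamma\lVert\Delta\rVert_\infty$. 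The fixed point is $\hat V^\pi$ because $\hat V^\pi$ satisfies every $n$-step Bellman equation in the imagined model. I do need that $\tau_{k}$ depends only on information up to step $k$, so the tower property applies. Banach's theorem then gives uniqueness.

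For part (ii), I would use a recursive error argument rather than the unrolled form. Set $\mathcal E_t=\sup|\mathbb E[R^\tau_t]-V^\pi(s_t)|$ with the critic equal to $V^\pi$. Subtract the true Bellman equation $V^\pi(s_t)=\mathbb E_{\text{true}}[r_t+\gamma V^\pi(s_{t+1})]$ from Eq.~\eqref{eq:return} rewritten as
\[
R^\tau_t=r_t+\gamma v(s_{t+1})+\gamma\lambda\tau_{t+1}\big(R^\tau_{t+1}-v(s_{t+1})\big).
\]
This splits the difference into two pieces. The first is the one-step model error in $r_t+\gamma V^\pi(s_{t+1})$ under the imagined versus the true transition, which is bounded by $\delta_t$; I would define $\delta_j$ exactly this way, as in App.~\ref{app:guarantees}. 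The second is $\gamma\lambda\tau_{t+1}(R^\tau_{t+1}-V^\pi(s_{t+1}))$, whose expectation is bounded by $\gamma\lambda\tau_{t+1}\mathcal E_{t+1}$. The recursion is therefore
\[
\mathcal E_t\le\delta_t+\gamma\lambda\tau_{t+1}\mathcal E_{t+1},\qquad \mathcal E_H=0,
\]
since $R^\tau_H=V^\pi(s_H)$. Unrolling from $t=0$ gives $\sum_j\gamma^j\lambda^j C_j\delta_j$ with $C_j=\prod_{k\le j}\tau_k$, which is Eq.~\eqref{eq:bias}. The index shift $\tau_{t+1}$ versus $C_j$ needs checking: the $\delta_j$ term picks up $\tau_1\cdots\tau_j=C_j$, which matches.

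The main obstacle is step (ii)'s measurability and coupling issue. The expectation in $R^\tau$ is under the model, but the error at depth $j$ is compared against the true dynamics from model-generated states. I would resolve this by working conditionally on the imagined state $s_j$, with $\delta_j$ a sup over states. I would also treat the trust profile as fixed (deterministic given the path), so that $\tau_{t+1}$ can be pulled out of the conditional expectation and bounded together with the absolute value.
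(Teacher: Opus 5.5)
Your proposal is correct and follows the paper's proof essentially step for step. Part (i) bounds the unrolled mixture of $n$-step returns pathwise by $\gamma\lVert v-v'\rVert_\infty$, and part (ii) runs a backward recursion on $R^{\tau}_t=r_t+\gamma v(s_{t+1})+\gamma\lambda\tau_{t+1}\big(R^{\tau}_{t+1}-v(s_{t+1})\big)$ with $\delta_j$ the sup of $(\hat T-T)V^{\pi}$ over depth-$j$ states. The only cosmetic difference is that the paper first derives the exact identity $\mathbb E[R^{\tau}_0\mid s_0]-V^{\pi}(s_0)=\sum_{j}\gamma^{j}\lambda^{j}C_j\,\mathbb E[(\hat T-T)V^{\pi}(s_j)\mid s_0]$ and bounds it at the end, whereas you take absolute values at each step, which gives the same bound under a fixed doubt profile.
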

\end{keybox}
(i) holds because the weights of Eq.~\eqref{eq:unrolled} are set by the head, not by the value being learned: trust changes how fast the critic gets there, not where. (ii) weights each depth's model error by $\lambda^{j}C_j$, the share of the return that reaches that depth, so the bound tightens wherever doubt precedes error.
\paragraph{Deciding what to do.}
Doubt is a function of $(s_t,a)$, so it is known for any candidate action before the action is taken (App.~\ref{app:ordering}). We score a candidate future by the trust-weighted sum of its rewards:
\begin{equation}
  J(a_{0:H-1})=\sum_{k=0}^{H-1}\gamma^{k}\,C_k\,r_k+\gamma^{H}\,C_H\,v(s_H).
  \label{eq:gate}
\end{equation}
$J$ is Eq.~\eqref{eq:return} with $\lambda=1$ and the fallback set to zero: untrusted imagined reward contributes nothing, whereas in learning that share went to the critic, so a course the model cannot vouch for cannot win on the critic's word. The simplest use of $J$ is a veto: when trust in the agent's course collapses, every reward beyond that point drops out of $J$, so the agent withdraws from the course and chooses again (Sec.~\ref{sec:decision}). Together, Eqs.~\eqref{eq:head}, \eqref{eq:return} and \eqref{eq:gate} apply one rule three times: keep the share of a prediction that experience supports, and hand the rest to a fallback, the uniform distribution, the critic, or zero.

\section{Experiments}\label{sec:exp}
\begin{table}[t]
\centering
\caption{(A)~After observation: AUROC of the readout, frames of the opened room against familiar ones; $0.5$ is chance. (B)~Before decision: lift $\rho_1/\rho_0$, the readout under corrupted actions over the readout under true ones; $1$ is no response. Under each column its cost relative to the base ($+c$: heads with $c$ times its parameters; $m$~fwd: $m$ forward passes). Means over five seeds; bold is the best in a row; {\color{black!45}grey} numbers move the wrong way, falling as the model leaves its experience; a dash ({--}) is a readout the base cannot provide. Each test uses eleven of the seventeen readouts; reconstruction and one-step error need the arriving frame, so appear in (A) only.}
\label{tab:main}
\providecommand{\hd}[2]{\begin{tabular}[b]{@{}c@{}}{\footnotesize #1}\\[-1pt]{\scriptsize\color{black!60}#2}\end{tabular}}
\providecommand{\ww}[1]{{\color{black!45}#1}}
{\small\setlength{\tabcolsep}{2.18pt}\renewcommand{\arraystretch}{1.300}\setlength{\aboverulesep}{0pt}\setlength{\belowrulesep}{0pt}\setlength{\extrarowheight}{1.5pt}
\begin{tabular*}{\linewidth}{@{\extracolsep{\fill}}ll@{\hspace{6.5pt}}>{\columncolor{lucidlight}[3pt][3pt]}c@{\hspace{6.5pt}}*{11}{c}@{}}
\toprule
\multicolumn{4}{@{}l}{\textbf{(A) After observation}} & \multicolumn{4}{c}{\footnotesize free readouts} & \multicolumn{4}{c}{\footnotesize fitted heads} & \multicolumn{2}{c@{}}{\footnotesize multiple forwards} \\
\cmidrule(lr){5-8}\cmidrule(lr){9-12}\cmidrule(l){13-14}
{\footnotesize base} & {\footnotesize task} & \hd{\textbf{ours}}{$1\times$} & \hd{base}{$1\times$} & \hd{entropy}{$1\times$} & \hd{KL}{$1\times$} & \hd{recon.}{$1\times$} & \hd{1-step}{$1\times$} & \hd{RND}{$+.18$} & \hd{RND-e}{$+.38$} & \hd{evid.}{$+.12$} & \hd{latent}{$+.62$} & \hd{self}{3 fwd} & \hd{deep}{$3\times$} \\
\midrule
DreamerV3 & \multirow{4}{*}{maze} & \textbf{0.77} & \ww{0.24} & \ww{0.31} & \ww{0.24} & \ww{0.06} & \ww{0.10} & \ww{0.39} & \ww{0.27} & \ww{0.48} & \ww{0.33} & \ww{0.21} & \ww{0.21} \\
R2-Dreamer & & \textbf{0.83} & \ww{0.37} & \ww{0.36} & 0.52 & -- & 0.50 & \ww{0.45} & 0.66 & \ww{0.30} & \ww{0.33} & \ww{0.41} & -- \\
EMERALD & & \textbf{0.96} & 0.72 & \ww{0.40} & \ww{0.27} & \ww{0.05} & \ww{0.11} & \ww{0.22} & \ww{0.20} & 0.85 & 0.83 & \ww{0.26} & -- \\
OC-STORM & & \textbf{0.82} & 0.75 & 0.75 & \ww{0.37} & \ww{0.09} & \ww{0.16} & 0.53 & \ww{0.46} & 0.57 & 0.59 & \ww{0.24} & -- \\
\midrule
\multicolumn{4}{@{}l}{\textbf{(B) Before decision}} & \multicolumn{2}{c}{\footnotesize free readouts} & \multicolumn{5}{c}{\footnotesize fitted heads} & \multicolumn{3}{c@{}}{\footnotesize multiple forwards} \\
\cmidrule(lr){5-6}\cmidrule(lr){7-11}\cmidrule(l){12-14}
{\footnotesize base} & {\footnotesize task} & \hd{\textbf{ours}}{$1\times$} & \hd{base}{$1\times$} & \hd{entropy}{$1\times$} & \hd{max p.}{$1\times$} & \hd{Mahal.}{$+.04$} & \hd{kNN}{$+2.4$} & \hd{RND}{$+.13$} & \hd{evid.}{$+.10$} & \hd{latent}{$+.52$} & \hd{MC drop}{20 fwd} & \hd{Laplace}{20 fwd} & \hd{snap.}{4 fwd} \\
\midrule
DreamerV3 & walker & \textbf{1.30} & \ww{0.85} & 1.03 & 1.00 & 1.06 & 1.07 & \ww{0.80} & 1.00 & 1.04 & 1.02 & 1.01 & \ww{0.93} \\
 & cheetah & \textbf{2.33} & \ww{0.70} & 1.14 & 1.14 & 1.10 & \ww{0.95} & \ww{0.93} & 1.00 & 1.04 & 1.04 & \ww{0.88} & \ww{0.94} \\
 & pendulum & \textbf{1.33} & 1.00 & 1.03 & 1.04 & 1.08 & 1.02 & \ww{0.96} & 1.00 & \ww{0.91} & 1.07 & \ww{0.84} & \ww{0.95} \\
 & Crafter & \textbf{1.49} & \ww{0.84} & 1.02 & \ww{0.97} & \ww{0.88} & \ww{0.93} & 1.02 & 1.00 & 1.01 & \ww{0.99} & \ww{0.92} & \ww{0.95} \\
R2-Dreamer\rule{0pt}{15pt} & walker & \textbf{3.11} & 1.19 & 1.00 & 1.01 & 2.45 & 1.14 & 1.36 & \ww{0.99} & 1.33 & 1.31 & 1.37 & -- \\
 & cheetah & \textbf{2.76} & \ww{0.99} & 1.01 & 1.02 & 1.17 & 1.01 & 1.18 & 1.00 & 1.04 & 1.21 & 1.01 & -- \\
 & pendulum & \textbf{1.53} & 1.02 & \ww{0.88} & \ww{0.87} & 1.01 & \ww{0.94} & 1.11 & \ww{0.99} & \ww{0.96} & \ww{0.96} & 1.11 & -- \\
 & Crafter & \textbf{1.15} & \ww{0.81} & 1.02 & 1.00 & 1.05 & 1.02 & 1.05 & 1.00 & 1.04 & 1.00 & 1.14 & 1.06 \\
EMERALD\rule{0pt}{15pt} & walker & \textbf{3.13} & 1.10 & 1.55 & 1.67 & 2.08 & 1.42 & 1.51 & 1.00 & 1.53 & \ww{0.05} & \ww{0.06} & \ww{0.16} \\
 & cheetah & \textbf{3.67} & 1.04 & 1.42 & 1.57 & 2.58 & 1.56 & 1.42 & 1.00 & 1.72 & \ww{0.28} & \ww{0.29} & \ww{0.25} \\
 & pendulum & \textbf{2.06} & 1.03 & 1.02 & 1.03 & 1.39 & 1.80 & 1.87 & 1.00 & 1.34 & \ww{0.62} & \ww{0.66} & \ww{0.94} \\
 & Crafter & \textbf{1.34} & 1.01 & 1.16 & 1.16 & \ww{0.89} & \ww{0.89} & 1.01 & 1.00 & 1.01 & \ww{0.97} & \ww{0.97} & 1.00 \\
OC-STORM\rule{0pt}{15pt} & walker & \textbf{5.62} & 1.16 & 2.74 & 2.14 & 1.08 & \ww{0.93} & 2.48 & 1.00 & 4.18 & 4.37 & 1.08 & -- \\
 & cheetah & \textbf{4.30} & 1.06 & 2.21 & 2.08 & 1.12 & \ww{0.98} & 1.47 & 1.03 & 2.18 & 2.95 & \ww{0.69} & -- \\
 & pendulum & \textbf{2.29} & 1.03 & 1.27 & 1.14 & \ww{0.86} & \ww{0.89} & \ww{0.98} & 1.00 & \ww{0.86} & 1.01 & \ww{0.80} & -- \\
 & Crafter & \textbf{2.59} & 1.04 & 2.08 & 1.79 & \ww{0.82} & \ww{0.92} & \ww{0.85} & 1.01 & 1.35 & \ww{0.61} & \ww{0.55} & \ww{0.80} \\
\bottomrule
\end{tabular*}
}
\end{table}

\subsection{Experimental protocol}\label{sec:setup}
\textbf{Bases.}
Lucid dreaming applies to any world model with a categorical transition head, and we test it on four bases that span the current designs, in four independent code bases, with the same readout and the same protocol throughout. \textbf{DreamerV3} \citep{hafner2025dreamerv3} is a \emph{recurrent} state-space model with a \emph{pixel decoder}; \textbf{R2-Dreamer} \citep{morihira2026rdreamer} is recurrent but learns its representation \emph{without reconstruction}; \textbf{EMERALD} \citep{burchi2025emerald} predicts its latent tokens with a \emph{masked transformer}; \textbf{OC-STORM} \citep{zhang2026ocstorm} is a \emph{transformer} world model, used here in its visual-only branch. Between them they cover recurrent and transformer dynamics, reconstruction-based and reconstruction-free representations, and flat and spatial latent states.

\textbf{Environments.}
We use three environment families to produce situations the model has not experienced. In DeepMind Control (DMC) continuous control (\textbf{walker}, \textbf{cheetah}, \textbf{pendulum}, \textbf{finger}, \textbf{cartpole}), where the dynamics are smooth, we corrupt the actions fed to imagination; in a first-person \textbf{ViZDoom maze}, navigated from $64\times64$ pixels with discrete actions, we train on one map and then alter it, opening a sealed door or repainting the walls; in \textbf{Crafter}, a procedurally generated open world, we corrupt the actions too and let the agent explore and meet caves, lava and terrain it has never seen. Together they cover continuous and discrete actions, proprioceptive and pixel inputs, and situations we inject, build in, or let the agent meet on its own (App.~\ref{app:further}).

\textbf{Opponents.}
We compare against seventeen readouts, eleven in each of the two tests, in the direction their authors define. \textbf{Free readouts} of the same pass: entropy, maximum probability, the KL between posterior and prediction, reconstruction and one-step error, and our own formula on the unmodified base (\emph{base} in Table~\ref{tab:main}); \textbf{fitted heads} on frozen features: RND on the latent and on the encoder embedding, an evidential head, Mahalanobis and $k$-NN distances, a latent ensemble; \textbf{multiple forwards}: MC dropout, Laplace, snapshot, self- and deep ensembles. Before decision, every fitted head and ensemble receives the same categorical latent our readout uses, so that what is compared is the readout, not how much the head is allowed to see. Appendix~\ref{app:opponents} defines each readout and its cost.

\subsection{Main results}\label{sec:main}
\textbf{(A) After observation.}
\emph{This test asks whether the doubt read after an observation tells that the world has changed.} We open a sealed door in the maze and drive the same routes on the old and the new map; a readout is scored by its AUROC (Table~\ref{tab:main}). On every base the LucidWM readout recognises the new room. Most readouts of the unmodified base point the wrong way: seen through familiar walls, the new room yields a sharp prediction, so they read it as \emph{more} familiar than the rest of the maze, and the fitted heads and ensembles, at up to three times the cost, are no more reliable.

\textbf{(B) Before decision.}
\emph{This test asks whether the doubt read before a decision tells that an imagined future has left the model's experience.} Imagination starts from real states, and its actions are replaced by random ones with growing probability; we report the lift $\rho_1/\rho_0$, the readout under fully corrupted actions over that under the true ones, so $1$ is no response and a readout passes when it rises above it. The LucidWM readout rises and leads on every row, while the same formula on the unmodified base moves little or the wrong way: fed nonsense, a model without doubt barely notices (App.~\ref{app:ladder}).

\needfit{262pt}
\subsection{Doubt rises beyond experience}\label{sec:experience}
\def\WFadj{0.40}
\begin{wrapfigure}{r}{0.625\linewidth}
  \vspace{-14.0pt}
  \raggedright
  \includegraphics[width=\linewidth]{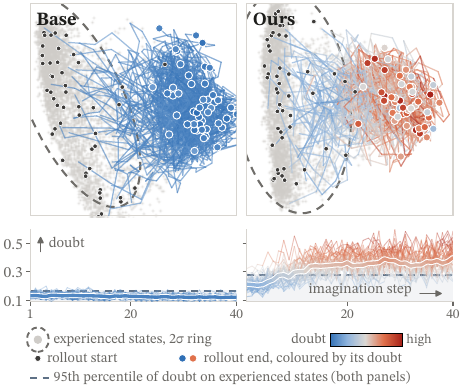}
  \caption{\textbf{Ours doubts the departure; base does not.} Top: experienced states and 48 random-action rollouts coloured by doubt. Bottom: doubt along the rollouts.}\label{fig:cloud}
  \vspace{-8pt}
\end{wrapfigure}
\textbf{Imagination leaving experience.}
\emph{This test asks whether doubt rises when imagination leaves what the model has experienced.} Fig.~\ref{fig:cloud} projects the visited states onto a plane, ringed at two standard deviations, and starts 48 random-action rollouts from real states in the cloud, forty steps each, colouring every step by doubt; below, each rollout's doubt and the median are drawn against the 95th percentile on experienced states. The rollouts of both models drift equally far out of the cloud, to twice the distance of any experienced state from its neighbours. Only LucidWM's doubt rises as they leave: within ten steps its median crosses the 95th percentile, and by the end nine in ten sit above it. The base's doubt never moves: its imagination has left everything it knows, and it is as confident as ever.

\textbf{Observation leaving experience.}
\emph{This test asks whether the doubt reads more than the surface of the frames.} We alter the maze in two ways (Fig.~\ref{fig:maze}): the walls of one room are repainted, which changes every pixel and nothing else; and a wall is opened onto a room that was sealed off during training, which changes the layout and shows the agent no texture it has not seen. For the new room we replay the same actions on the training map and read the difference. In the repainted room the lucid doubt rises far above its alarm line again and again; at the new room, it rises each time the agent turns to the opening and falls back as soon as it turns away. Under one alarm rule, set on familiar frames, the LucidWM readout raises an alarm in both rooms and the base's readout in neither. The doubt reads the structure beneath the pixels: every frame at the opening is made of familiar walls, and the doubt rises all the same.
\WFfinish{4}

\begin{figure}[H]
  \centering
  \includegraphics[width=0.938\linewidth]{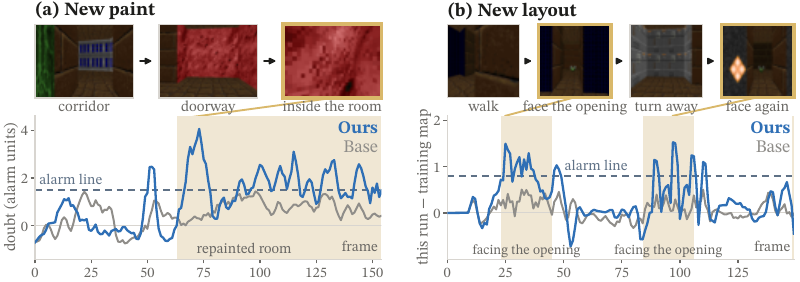}
  \caption{\textbf{The alarm fires where the world changed.} (a)~A drive into the repainted room. (b)~A patrol that twice faces the newly opened room. Doubt in each model's own units; dashed line: alarm line; sand: frames in the changed region, where the gold-outlined pictures were taken.}\label{fig:maze}
\end{figure}
\needfit{186pt}
\subsection{Doubt falls as experience grows}\label{sec:calibration}
\def\WFadj{1.35}
\begin{wrapfigure}{r}{0.632\linewidth}
  \vspace{-14.0pt}
  \raggedright
  \includegraphics[width=\linewidth]{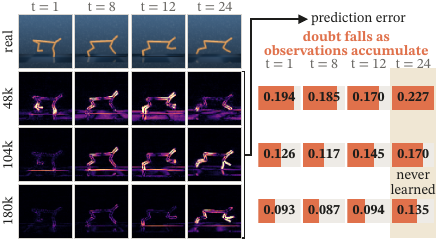}
  \caption{\textbf{Learning more, doubting less.} Rows: checkpoints of one run; columns: imagined steps. Tiles: pixel error, dark right, bright wrong. Shaded: chaotic horizon, never learned.}\label{fig:growth}
  \vspace{-8pt}
\end{wrapfigure}
\textbf{Experience accumulating.}
As the model sees more of the world, its doubt should drain, and drain where its predictions improve. We take one stretch of experience and, at six checkpoints of one run (three in Fig.~\ref{fig:growth}, all in App.~\ref{app:growth}), imagine the same future from it and compare it with what happened. At one step the doubt halves, falling at every checkpoint, and the error falls nearly 25-fold, though the two are measured independently: the error by comparing pixels, the doubt from the transition's evidence alone. Step 24 lies past the point where the dynamics turn chaotic, so no amount of training makes the prediction right; there the doubt stays the highest of the steps shown at every checkpoint.
\WFfinish{6}

\needfit{220pt}
\subsection{Ablating trust}\label{sec:imagination}
\def\WFadj{0.70}
\begin{wrapfigure}{r}{0.614\linewidth}
  \vspace{-14.0pt}
  \raggedright
  \includegraphics[width=\linewidth]{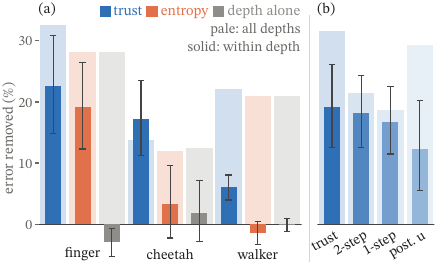}
  \caption{\textbf{Trust tells which imagined steps to keep.} Error removed by dropping the worst fifth of steps: (a)~three DMC tasks, (b)~cheetah from pixels. Pale: over all depths; solid: within each depth; whiskers: bootstrap 95\% intervals.}\label{fig:riskcov}
  \vspace{-8pt}
\end{wrapfigure}
\textbf{Trust as a filter.}
If trust means what it should, the imagined steps it trusts least should be the ones most wrong. We test this: from 480 real states per run we imagine 15 steps, rank every imagined step by a readout, drop the worst fifth, and measure how far the prediction error of the rest falls, on three DMC tasks. Deep steps are worse on average, so depth alone is a good filter, and over all depths it matches both trust and entropy (Fig.~\ref{fig:riskcov}a, pale bars). To see what a readout knows beyond depth, we rank within each depth (solid bars): depth alone drops to zero, entropy to almost nothing on two of three tasks, while trust keeps a gain of its own on all three, up to 23\%. Over all depths, compounded trust removes 32\% of the error against 19\% for the doubt of one step; within each depth, the posterior's doubt trails both (Fig.~\ref{fig:riskcov}b): compounding carries the depth of a dream, the doubt of each transition its drift, so trust reads both how far the dream has gone and how far it has drifted (App.~\ref{app:filmstrip}--\ref{app:window}).
\WFfinish{7}

\subsection{Acting on trust}\label{sec:decision}
\textbf{A repainted dead end.}
We run the same weights and seed twice through a dead end whose deep half was repainted after training: every wall is where it was, so a policy that learned the map walks in as before. Once the agent ignores its doubt (\emph{base}); once it acts on trust with the veto of Sec.~\ref{sec:decide} (\emph{ours}). The veto is one rule: when doubt holds above the alarm line for three frames, trust in the course collapses, the agent turns, and its policy resumes. For the first 24 steps the two are one agent. At step 24 the veto fires; ours turns, leaves the dead end and reaches the goal, an armour, at step 190, while base circles the room until step 262 and arrives at step 362 (Fig.~\ref{fig:decision}): the same policy, deciding 238 steps earlier and arriving in half the steps. Over its whole walk the entropy of base crosses its own line for a single frame, so the same rule would never have fired. On both altered maps, 143 of 145 vetoes fired inside a changed room, though the agent is never told where the changes are (App.~\ref{app:vetoes}).
\begin{figure}[H]
  \centering
  \includegraphics[width=0.600\linewidth]{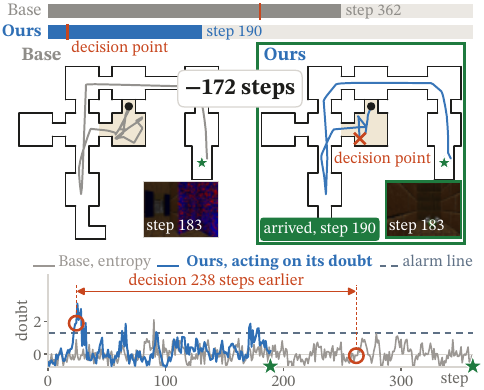}
  \caption{\textbf{One veto, half the steps.} Top: steps to the armour. Middle: both routes from the repainted dead end (sand) to the armour (star), and the two views at step 183, seven steps before ours arrives. Bottom: doubt along ours and the entropy of base, each on its own $p_{99}$ alarm line; circles: where ours turns and base leaves.}\label{fig:decision}
\end{figure}
\vspace{-15pt}
\FloatBarrier

\section{Conclusion}
In this work, we identified that existing uncertainty estimates for world models are read from the prediction, and therefore stay confident on exactly what the model has never experienced. To address this, we proposed the Lucid World Model (LucidWM), which learns uncertainty from experience: SL is built into the latent state transition, so that every predicted state carries a doubt, doubt compounds along imagination into trust, and trust weights both what the agent learns from it and what it decides to do. Experiments on four world-model bases and against seventeen readouts show that the doubt recognises a changed world after observation and a drifting rollout before decision, where prior readouts barely move or move the wrong way, and that acting on trust leads to safer choices and faster learning (App.~\ref{app:choose},~\ref{app:crafterlearn}). A standard world model is the special case with both constants pinned; unfolding them adds no parameter and no forward pass.

\setlength{\bibsep}{2pt plus 1pt}
\bibliography{references}
\bibliographystyle{lucid}

\clearpage
\appendix
\etocdepthtag.toc{app}
\appendixcontents
\appgroup{Overview}
\section{Notation}\label{app:notation}
\begin{table}[h]
  \caption{Symbols of Secs.~\ref{sec:prelim}--\ref{sec:exp}. Per variable: defined for each of the $G$ categorical variables, whose index $g$ is dropped when clear. The last column gives the value that recovers the standard world model.}
  \label{tab:notation}
  \centering\footnotesize\setlength{\tabcolsep}{4pt}\renewcommand{\arraystretch}{1.08}
  \begin{tabular}{@{}>{\raggedright\arraybackslash}p{2.68cm}>{\raggedright\arraybackslash}p{7.75cm}>{\raggedright\arraybackslash}p{2.12cm}>{\raggedright\arraybackslash}p{1.75cm}@{}}
    \toprule
    symbol & meaning & defined & standard \\
    \midrule
    \multicolumn{4}{@{}l}{\emph{World model and imagination}}\\
    $x_t,\ s_t,\ a_t$ & observation, latent state, action at time $t$ & Sec.~\ref{sec:prelim} & \\
    $G,\ K$ & number of categorical variables of the state; classes per variable & Sec.~\ref{sec:prelim} & \\
    $z_{t+1}$ & one categorical variable of the next state & Eq.~\eqref{eq:unimix} & \\
    $\ell(s_t,a_t)$ & the transition head's $K$ logits (per variable) & Eq.~\eqref{eq:unimix} & \\
    $p(z_{t+1}\mid s_t,a_t)$ & the head's prediction & Eqs.~\eqref{eq:unimix}, \eqref{eq:head} & \\
    $q(\cdot\mid s_t,a_t,x_{t+1})$ & posterior of a standard world model & Sec.~\ref{sec:prelim} & \\
    $\varepsilon$ & uniform share mixed into the head ($0.01$ in DreamerV3); floor of the doubt & Eq.~\eqref{eq:unimix} & \\
    $r_t,\ \gamma,\ H$ & imagined reward, discount, imagination horizon & Eq.~\eqref{eq:lambda} & \\
    $\pi,\ v$ & actor, critic & Eq.~\eqref{eq:lambda} & \\
    $\lambda$ & share of the imagined future kept at each step & Eq.~\eqref{eq:lambda} & \\
    $R^{\lambda}_t$ & $\lambda$-return & Eq.~\eqref{eq:lambda} & \\
    \midrule
    \multicolumn{4}{@{}l}{\emph{Opinion of the transition head (per variable)}}\\
    $e,\ e_i$ & evidence $\mathrm{softplus}\,\ell$; its entry for class $i$ & Eq.~\eqref{eq:opinion}, Sec.~\ref{sec:doubt} & \\
    $W$ & prior weight of the opinion & Eq.~\eqref{eq:opinion} & \\
    $S$ & total evidence $\sum_i e_i$ & Eq.~\eqref{eq:opinion} & $W(1-\varepsilon)/\varepsilon$ \\
    $\hat p$ & direction of the evidence, $e/S$ & Eq.~\eqref{eq:opinion} & $\mathrm{softmax}(\ell)$ \\
    $\alpha_i,\ P$ & Dirichlet concentration $e_i+W/K$; mean of the opinion & Eq.~\eqref{eq:opinion} & \\
    $u$ & doubt: uncertainty mass $W/(W+S)$, floored at $\varepsilon$ & Sec.~\ref{sec:doubt} & $\varepsilon$ \\
    $\mathcal L_u,\ \beta_u$ & evidence discipline and its weight & Eq.~\eqref{eq:discipline} & $\beta_u=0$ \\
    $e_{\mathrm{obs}}(x_{t+1})$ & evidence read from the observation alone & Eq.~\eqref{eq:fusion} & \\
    $\oplus,\ e_{\mathrm{post}}$ & fusion of opinions (here cumulative, which adds evidence); posterior evidence & Eq.~\eqref{eq:fusion} & \\
    \midrule
    \multicolumn{4}{@{}l}{\emph{Trust along imagination and decision}}\\
    $\bar u_{t+1}$ & doubt of the transition $(s_t,a_t)$, averaged over the $G$ variables & Sec.~\ref{sec:trust} & $\varepsilon$ \\
    $\tau_{t+1}$ & trust of that transition, $(1-\bar u_{t+1})/(1-\varepsilon)$ & Sec.~\ref{sec:trust} & $1$ \\
    $R^{\tau}_t$ & trust-weighted return & Eq.~\eqref{eq:return} & $R^{\lambda}_t$ \\
    $C_n$ & trust carried to imagined step $n$, $\prod_{k=1}^{n}\tau_k$, $C_0=1$ & Sec.~\ref{sec:trust} & $1$ \\
    $R^{(n)}$ & $n$-step return from the real state $s_0$ & Eq.~\eqref{eq:unrolled} & \\
    $\hat V^{\pi},\ V^{\pi}$ & value of $\pi$ in the imagined model; in the environment & Thm.~\ref{thm:guar} & \\
    $\delta_j$ & the model's one-step error at imagined depth $j$ & Thm.~\ref{thm:guar} & \\
    $J(a_{0:H-1})$ & trust-weighted score of a candidate course of actions & Eq.~\eqref{eq:gate} & $C\equiv1$ \\
    $\rho_1/\rho_0$ & lift: readout under corrupted actions over readout under true ones & Sec.~\ref{sec:main} & \\
    \bottomrule
  \end{tabular}
\end{table}
Symbols used inside a single proof in Apps.~\ref{app:count}--\ref{app:proofs} are defined where they appear.

\section{What LucidWM changes}\label{app:changes}
A standard categorical world model pins two constants: its head mixes the same $\varepsilon$ into every prediction, Eq.~\eqref{eq:unimix}, and its return keeps the same share $\lambda$ of every imagined step, Eq.~\eqref{eq:lambda}. LucidWM unfolds both into quantities read from the head's own logits and applies one rule at every stage: keep the share of a prediction that experience supports, and hand the rest to a fallback. Table~\ref{tab:changes} lists every change; the statements behind each row are Prop.~\ref{thm:two}, Prop.~\ref{prop:track}, Prop.~\ref{prop:seeing}, Thm.~\ref{thm:guar} and Prop.~\ref{prop:acting}. No parameter and no forward pass is added: the posterior is the fused evidence of Eq.~\eqref{eq:fusion} in place of a second network, and the readouts $u$, $\tau$ and $C$ are functions of quantities the model already computes.
\begin{table}[H]
  \caption{What LucidWM changes in a categorical world model. Every readout is a function of the head's logits; the standard model is recovered by $u\equiv\varepsilon$ and $\tau\equiv1$.}
  \label{tab:changes}
  \centering\footnotesize\setlength{\tabcolsep}{4pt}
  \begin{tabular}{@{}llll@{}}
    \toprule
    stage & standard & LucidWM & fallback \\
    \midrule
    prediction, Eq.~\eqref{eq:head} & $(1-\varepsilon)\,\mathrm{softmax}(\ell)+\varepsilon/K$ & $(1-u)\,\hat p+u/K$, $u$ from $S$ & uniform $1/K$ \\
    training, Eq.~\eqref{eq:discipline} & fit only & fit and evidence discipline & the vacuous opinion \\
    posterior, Eq.~\eqref{eq:fusion} & a second network $q$ & $e\oplus e_{\mathrm{obs}}$ & none: doubt never rises \\
    return, Eq.~\eqref{eq:return} & share $\lambda$ per step & share $\lambda\tau_{t+1}$, $\tau$ from $\bar u$ & critic $v$ \\
    decision, Eq.~\eqref{eq:gate} & reward counted in full & reward weighted by $C_k$ & zero \\
    \bottomrule
  \end{tabular}
\end{table}

\appgroup{Theory}
\section{Why the doubt reads experience}\label{app:count}
Section~\ref{sec:doubt} reads the total evidence behind a prediction as the experience the head brings to a scene and action. This appendix gives the grounds in five steps (Table~\ref{tab:appc}), with one example pair, in green boxes, followed through all of them. Every result treats the evidence of each scene--action pair on its own and its target as fixed; App.~\ref{app:track} ends with what carries over to a network with shared parameters.
\begin{table}[h]
  \caption{The argument of App.~\ref{app:count}, one result per step.}
  \label{tab:appc}
  \centering\small\renewcommand{\arraystretch}{1.12}
  \begin{tabular}{@{}l>{\raggedright\arraybackslash}p{0.75\linewidth}l@{}}
    \toprule
    step & result & where\\
    \midrule
    measure & the doubt $u=W/(W+S)$ sees the evidence only through its total $S$ & Eq.~\eqref{eq:opinion}\\
    ideal & in a table updated by Bayes' rule, $S$ is the joint count; an untaken pair has $u=1$ & Prop.~\ref{prop:count}\\
    obstacle & neither a readout of the prediction nor the fit can see $S$ & Prop.~\ref{prop:blind}\\
    remedy & the discipline keeps the most doubtful opinion with its mean; $u=1$ if untaken & Prop.~\ref{prop:track}\\
    payoff & the doubt sets the expected error of the prediction & Prop.~\ref{prop:error}\\
    \bottomrule
  \end{tabular}
\end{table}

\subsection{The doubt measures the evidence total}\label{app:dirichlet}
\paragraph{Notation.} An opinion over $K\ge2$ classes with evidence $e\ge0$ is the Dirichlet $\boldsymbol\omega\sim\mathrm{Dir}(\alpha)$ of Eq.~\eqref{eq:opinion}, with
\begin{gather*}
  S=\sum_ie_i,\qquad \alpha=e+\tfrac WK\mathbf 1,\qquad \alpha_0=W+S,\qquad u=\frac W{\alpha_0},\\
  \hat p=\frac eS,\qquad P=\mathbb E[\boldsymbol\omega]=\frac{\alpha}{\alpha_0}=(1-u)\,\hat p+\frac uK .
\end{gather*}
A \emph{target} $q$ is the distribution the fit pulls the prediction toward (in training, the posterior); the \emph{fit} is $\mathrm{KL}(q\,\Vert\,P)$; a \emph{readout} is any quantity computed from $P$. We say \emph{count} for the number $N$ of transitions taken from a pair and \emph{evidence total} for $S$.

The doubt is the share of the mean $P$ held by the prior: the uncertainty mass of SL \citep{josang2016subjective}, whose belief masses $e_i/\alpha_0$ make up the rest. It depends on the evidence only through $S$, is one at $S=0$ and falls strictly as $S$ grows. By conjugacy, observed counts add to $\alpha$, so evidence adds; this is the cumulative fusion of App.~\ref{app:fusion}.
\vspace{4pt}
\begin{proof}[Proof of Prop.~\ref{thm:two}]
A head of the form Eq.~\eqref{eq:unimix} is Eq.~\eqref{eq:head} with $\hat p=\mathrm{softmax}(\ell)$ and $u=\varepsilon$, that is, the opinion with the single total $S_\varepsilon=W(1-\varepsilon)/\varepsilon$ at every input. With the customary $W=2$ \citep{josang2016subjective} and DreamerV3's $\varepsilon=0.01$, $S_\varepsilon=198$.
\end{proof}
\begin{exbox}
\textbf{Example.} Take $K=3$ and $W=2$, and a pair taken $N=8$ times whose next latent fell in the three classes $6$, $2$ and $0$ times. Then $e=(6,2,0)$, $S=8$ and $u=0.2$: the prior holds a fifth of the prediction $P\approx(0.67,\,0.27,\,0.07)$.
\end{exbox}

\subsection{In a table, the doubt counts experience}\label{app:countprop}
\begin{keybox}
\begin{proposition}[Doubt counts joint experience]\label{prop:count}
Give each scene--action pair $(s,a)$ its own opinion $\boldsymbol\omega(s,a)\sim\mathrm{Dir}((W/K)\mathbf 1)$ over one categorical variable of the next latent, independent across pairs, and update it by Bayes' rule on the transitions experienced from it. After $N(s,a,z')$ transitions $(s,a)\to z'$ for each $z'$,
\begin{equation}
  e(s,a)=N(s,a,\cdot),\qquad u(s,a)=\frac{W}{W+N(s,a)},\qquad N(s,a)=\sum_{z'}N(s,a,z').
  \label{eq:count}
\end{equation}
The doubt depends on experience only through the joint count $N(s,a)$, and is one for a pair never taken, however often its scene and its action were seen apart.
\end{proposition}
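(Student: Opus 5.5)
The plan is to use Dirichlet--categorical conjugacy, applied separately to each pair. Fix a pair $(s,a)$ and one categorical variable of the next latent. The prior is $\boldsymbol\omega(s,a)\sim\mathrm{Dir}((W/K)\mathbf 1)$. Each transition taken from $(s,a)$ produces a draw $z'\sim\mathrm{Cat}(\boldsymbol\omega(s,a))$, conditionally independent given $\boldsymbol\omega$. The first step writes the likelihood of the $N(s,a)$ transitions as $\prod_{z'}\omega_{z'}^{N(s,a,z')}$. Multiplying by the prior density $\propto\prod_{z'}\omega_{z'}^{W/K-1}$ gives a posterior $\propto\prod_{z'}\omega_{z'}^{N(s,a,z')+W/K-1}$, which is $\mathrm{Dir}\big(N(s,a,\cdot)+(W/K)\mathbf 1\big)$.

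The second step matches this posterior against the opinion parametrisation of Eq.~\eqref{eq:opinion}. There $\alpha=e+(W/K)\mathbf 1$, and with a uniform base rate this map from $e$ to $\alpha$ is a bijection. So the evidence must be $e(s,a)=N(s,a,\cdot)$. Summing over classes gives $S=N(s,a)$, and therefore $u=W/(W+S)=W/(W+N(s,a))$. This is exactly Eq.~\eqref{eq:count}. The same conclusion can also be reached by noting, as in App.~\ref{app:dirichlet}, that observed counts add to $\alpha$, which is cumulative fusion of count evidence.

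The third step establishes the "only through the joint count" and "untaken pair" claims. This is where independence across pairs is needed, and I expect it to be the one point requiring care. The joint prior factorises over pairs. The likelihood of the whole experience also factorises, because each transition's probability depends only on the $\boldsymbol\omega$ of the pair it was taken from. Hence the joint posterior factorises as well, and the posterior for $(s,a)$ involves only the transitions taken from $(s,a)$ itself. Transitions from $(s,b)$ or $(s',a)$ never appear in its factor. So seeing the scene or the action separately, however often, leaves $\boldsymbol\omega(s,a)$ at its prior. For a pair never taken, $N(s,a)=0$, so $e=0$, $S=0$ and $u=W/W=1$: the vacuous opinion.

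Finally, $u$ is a function of $S=N(s,a)$ alone. Two pairs with the same joint count therefore have the same doubt, whatever the split of outcomes across classes; that split changes only $\hat p$. The $\varepsilon$ floor from Sec.~\ref{sec:doubt} plays no role here, because the statement concerns the raw uncertainty mass of the Bayesian table.
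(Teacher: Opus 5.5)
Your proposal is correct and follows the paper's own proof: you factorise the independent priors and the likelihood over pairs, apply Dirichlet--categorical conjugacy to get $\mathrm{Dir}\big(N(s,a,\cdot)+(W/K)\mathbf 1\big)$, and read off $e=N(s,a,\cdot)$, $S=N(s,a)$ and $u=W/(W+N(s,a))$ from Eq.~\eqref{eq:opinion}. Your remark that $e\mapsto\alpha$ is a bijection, which makes the identification of the evidence unambiguous, is a step the paper leaves implicit.
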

\end{keybox}
\begin{proof}
The priors are independent across pairs and the likelihood $\prod_{(s,a)}\prod_{z'}\omega_{z'}(s,a)^{N(s,a,z')}$ factorises over pairs. By conjugacy the posterior of $(s,a)$ is $\mathrm{Dir}\big(N(s,a,\cdot)+(W/K)\mathbf 1\big)$, so $e(s,a)=N(s,a,\cdot)$, $S=N(s,a)$ and $u=W/(W+N(s,a))$; no count of another pair enters.
\end{proof}
\begin{exbox}
\textbf{Example.} The evidence $e=(6,2,0)$ is exactly the counts; a pair never taken has $e=0$ and $u=1$.
\end{exbox}
\paragraph{In words.} A familiar scene and a familiar action that were never taken together, the case of Fig.~\ref{fig:story}, therefore have doubt one. A network is not a table: it shares one function $e(s_t,a_t)$ across pairs and is trained by a fit, which, as the next step shows, cannot recover the count.

\subsection{The prediction and the fit cannot see the evidence total}\label{app:blind}\label{app:interval}
Many opinions share one prediction (Fig.~\ref{fig:family}a).
\begin{figure}[H]
  \centering
  \includegraphics[width=0.893\linewidth]{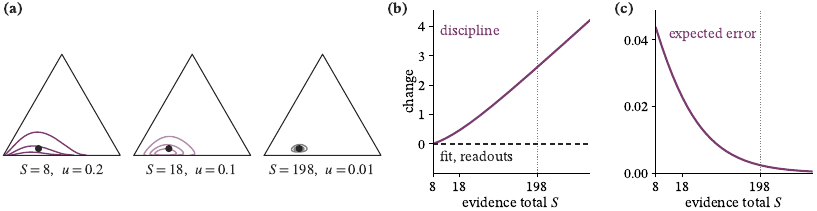}
  \caption{\textbf{One prediction, many evidence totals}, for the example pair. (a)~Dirichlets with the same mean $P$ (dot) and totals $S=8$, $18$ and $198$. (b)~Change from $S=8$ along this family: the fit and every readout stay constant (Prop.~\ref{prop:blind}), while the discipline grows with the total (Lemma~\ref{lem:disc}); among opinions with this mean it prefers the smallest total, $S=8$, the count. (c)~The expected error falls with the total (Prop.~\ref{prop:error}); at $S=198$, the total a standard head asserts, it is eighteen times smaller than at the count.}
  \label{fig:family}
\end{figure}
\begin{keybox}
\begin{proposition}[The prediction and the fit are blind to the total]\label{prop:blind}
Fix a prediction $P$ with full support and let $m=K\min_iP_i$.\\
(i) For every $u\in(0,m]$, the opinion $\mathrm{Dir}\big((W/u)P\big)$ has mean $P$ and doubt $u$, and it is the only one; no opinion with mean $P$ has doubt above $m$.\\
(ii) Every readout, and the fit to any fixed target $q$, take the same value on all of these opinions.
\end{proposition}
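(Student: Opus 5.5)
The plan is to invert the mean map of Eq.~\eqref{eq:opinion} for a fixed doubt, and then observe that every quantity in (ii) is a function of the mean alone.

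For (i), fix $u\in(0,1]$ and look for an opinion with mean $P$ and doubt $u$. The doubt $u=W/\alpha_0$ fixes the strength $\alpha_0=W/u$. The mean $P=\alpha/\alpha_0$ then fixes the concentration $\alpha=\alpha_0P=(W/u)P$. This already gives uniqueness: any opinion with mean $P$ and doubt $u$ must be $\mathrm{Dir}((W/u)P)$.

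It remains to check when this concentration comes from a legitimate opinion. The evidence is $e=\alpha-(W/K)\mathbf 1=(W/u)P-(W/K)\mathbf 1$, and admissibility requires $e_i\ge0$ for every $i$. That condition reads $P_i/u\ge 1/K$, that is, $u\le K P_i$ for all $i$, so $u\le K\min_iP_i=m$.

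Two consistency checks close (i):
\begin{itemize}
\item Full support gives $m>0$, so the interval $(0,m]$ is nonempty.
\item $\sum_iP_i=1$ forces $m\le1$, so every such $u$ is a valid doubt.
\end{itemize}
For $u\in(0,m]$ the opinion exists, and by construction its mean is $P$ and its doubt is $u$. Conversely, an opinion with mean $P$ and doubt $u>m$ would need $e_i<0$ at the minimising class, so none exists.

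For (ii), use the definitions of App.~\ref{app:dirichlet}. A readout is any quantity computed from $P$, and the fit is $\mathrm{KL}(q\,\Vert\,P)$, which depends on the opinion only through $P$. Every opinion in the family has the same mean $P$ by (i), so each of these quantities takes one value across the whole family.

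One point needs care in the proof: the floor $u\ge\varepsilon$ of Sec.~\ref{sec:doubt} is not part of this statement. The proposition concerns raw opinions with doubt $W/(W+S)$, so I would state that the floor is ignored here.

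The main obstacle, mild as it is, is the admissibility constraint $e\ge0$ in (i). This constraint is the source of the upper endpoint $m$, and it is what the later result App.~\ref{app:interval} relies on: the sharpness of the target bounds the doubt the prediction may keep. I would therefore write that inequality out explicitly rather than leave it implicit.
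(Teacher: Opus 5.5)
Your proposal is correct and follows essentially the same argument as the paper. The doubt fixes $\alpha_0=W/u$, the mean then forces $\alpha=(W/u)P$, and nonnegativity of $e=(W/u)P-(W/K)\mathbf 1$ holds exactly when $u\le m$; part (ii) follows because the readouts and the fit depend on the opinion only through $P$. Your checks that $0<m\le1$ and your remark that the floor $\varepsilon$ plays no role are harmless additions the paper leaves implicit. One small correction to your closing remark: App.~\ref{app:interval} is this same subsection, not a later result that builds on it.
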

\end{keybox}
\begin{proof}
An opinion with doubt $u$ has $\alpha_0=W/u$, so its mean is $P$ exactly when $\alpha=(W/u)P$, that is, when
\begin{equation*}
  e=\frac Wu\,P-\frac WK\,\mathbf 1 ,
\end{equation*}
which is nonnegative exactly when $u\le m$. Readouts and the fit depend on the opinion only through $P$.
\end{proof}
\begin{exbox}
\textbf{Example.} Here $m=0.2$, and the opinions with $S=8$, $18$ and $198$ all have mean $P$ (Fig.~\ref{fig:family}a), so every readout and every fit takes one value on them (Fig.~\ref{fig:family}b).
\end{exbox}
\paragraph{In words.} The readouts of Table~\ref{tab:main} that are read off the prediction therefore cannot tell the unexperienced pair of Fig.~\ref{fig:story} from an experienced pair with the same sharp prediction. The fit cannot set the total either: its gradient in $u$ vanishes over the whole interval $(0,m]$. This is the non-identifiability of second-order learners whose loss sees only the predictive mean \citep[Thm.~3.2]{jurgens2024faithful}, while losses averaged over the second-order distribution collapse it to a point mass \citep[Thm.~1]{bengs2022pitfalls}. Something other than the fit must set the total.

\subsection{The discipline makes the doubt count experience}\label{app:track}
The evidence discipline sets the total. We model training in the form of Prop.~\ref{prop:count}: the discipline takes the place of the prior, charged once on every pair, and the fit the place of the likelihood, charged once per experienced transition. The discipline pulls every pair toward the vacuous opinion; Prior Networks pull toward a flat Dirichlet, and only on inputs out of distribution \citep{malinin2018prior}. A pair with $N$ experienced transitions and target $q$ is then trained by
\begin{align*}
  \mathcal L_N(e)&=N\,\mathrm{KL}\big(q\,\Vert\,P\big)+\beta_u\,D(\alpha),\\
  D(\alpha)&=\mathrm{KL}\big(\mathrm{Dir}(\alpha)\,\Vert\,\mathrm{Dir}(\tfrac WK\mathbf 1)\big)=\log\frac{\Gamma(\alpha_0)\,\Gamma(\frac WK)^K}{\Gamma(W)\prod_i\Gamma(\alpha_i)}+\sum_ie_i\big[\psi(\alpha_i)-\psi(\alpha_0)\big],
\end{align*}
with $\psi$ the digamma function. By Prop.~\ref{prop:blind} an opinion is fixed by its mean $P$ and its strength $\alpha_0=W+S\ge t_P:=W/(K\min_iP_i)$, and the fit depends on $P$ alone. The discipline grows with the total:
\begin{lemma}[The discipline grows with the evidence]\label{lem:disc}
(i) At a fixed mean $P$, $D$ is strictly increasing in $\alpha_0$ on $[t_P,\infty)$.\\
(ii) $D\ge0$, with equality only at $e=0$.
\end{lemma}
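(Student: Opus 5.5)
The plan is to treat the two parts separately. Part (ii) is the standard property of a KL divergence, and part (i) is a one-variable derivative computation along the ray of opinions that share the mean $P$.

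\textbf{Part (ii).} $D$ is a Kullback--Leibler divergence between two Dirichlet densities on the simplex, so Gibbs' inequality gives $D\ge0$. Equality holds only when the two densities agree almost everywhere. A Dirichlet is determined by its concentration vector, so equality forces $\alpha=(W/K)\mathbf 1$, that is $e=0$.

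\textbf{Part (i): the derivative along the ray.} Write $\beta=(W/K)\mathbf 1$, so that $e=\alpha-\beta$. Differentiating the closed form of $D$ with respect to each $\alpha_i$ gives two sets of terms. The $\log\Gamma$ terms contribute $\psi(\alpha_0)-\psi(\alpha_i)$. The entropy-like sum contributes $\psi(\alpha_i)-\psi(\alpha_0)+e_i\psi'(\alpha_i)-S\,\psi'(\alpha_0)$. The digamma terms cancel, leaving
\begin{equation*}
  \frac{\partial D}{\partial\alpha_i}=e_i\,\psi'(\alpha_i)-S\,\psi'(\alpha_0).
\end{equation*}
By Prop.~\ref{prop:blind}, the opinions with mean $P$ are $\alpha=cP$ with $c=\alpha_0\ge t_P$, and the constraint $e\ge0$ holds on this range. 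Since $d\alpha_i/dc=P_i$, the chain rule gives
\begin{equation*}
  \frac{d}{dc}D(cP)=\sum_iP_i\,e_i\,\psi'(cP_i)-S\,\psi'(c)=\frac1c\Big[\sum_i e_i\,(cP_i)\,\psi'(cP_i)-S\,c\,\psi'(c)\Big].
\end{equation*}

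\textbf{Sign of the derivative.} Let $g(x)=x\psi'(x)$. If $g$ is strictly decreasing on $(0,\infty)$, the sign follows. Full support and $K\ge2$ give $P_i<1$, hence $cP_i<c$ and $g(cP_i)>g(c)$ for every $i$. The bracket is therefore at least $\sum_ie_i\,g(c)-S\,g(c)=0$, with strict inequality whenever some $e_i>0$, i.e. whenever $S=c-W>0$.

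The only place the derivative can vanish is $S=0$. This occurs only at $c=W$, which lies in the range only when $P$ is uniform, and then only at the endpoint $t_P$. A function whose derivative is positive on the interior of the interval is strictly increasing on the closed interval, so (i) follows.

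\textbf{Main obstacle: monotonicity of $g$.} The step I expect to be hardest is showing that $g(x)=x\psi'(x)$ is strictly decreasing; termwise use of the series $\psi'(x)=\sum_n(x+n)^{-2}$ does not settle it. I would use the integral representation $\psi'(x)=\int_0^\infty t\,e^{-xt}/(1-e^{-t})\,dt$ and substitute $s=xt$. This gives
\begin{equation*}
  g(x)=\int_0^\infty h(s/x)\,e^{-s}\,ds,\qquad h(y)=\frac{y}{1-e^{-y}}.
\end{equation*}
The function $h$ is strictly increasing on $(0,\infty)$, and $s/x$ is strictly decreasing in $x$ for each $s>0$. Hence $g$ is strictly decreasing, which completes the argument.
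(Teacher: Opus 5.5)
Your proposal is correct and follows the paper's proof essentially step for step. It uses the same partial derivative $e_i\psi'(\alpha_i)-S\psi'(\alpha_0)$, the same reduction along $\alpha=cP$ to $\frac1c\sum_ie_i[\xi(cP_i)-\xi(c)]$ with $\xi(y)=y\psi'(y)$ strictly decreasing, and Gibbs' inequality for (ii). The only variation is that you get the monotonicity of $\xi$ by substituting $s=xt$, which gives $\xi(x)=\int_0^\infty h(s/x)e^{-s}\,ds$, where the paper integrates by parts to $\xi(y)=1+\int_0^\infty e^{-yt}h'(t)\,dt$; your form also gives $\xi>1$ at once since $h>1$, which is the fact the paper reuses in the proof of Prop.~\ref{prop:track}.
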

\begin{proof}
\textit{Idea.} At a fixed mean, the derivative of $D$ in the total is a positively weighted sum of $\xi(\alpha_i)-\xi(\alpha_0)$ for a decreasing $\xi$.

Differentiating gives $\partial D/\partial\alpha_i=e_i\,\psi'(\alpha_i)-S\,\psi'(\alpha_0)$. The function $\xi(y)=y\,\psi'(y)$ is strictly decreasing, since integrating $\psi'(y)=\int_0^\infty t\,e^{-yt}(1-e^{-t})^{-1}dt$ \citep[Eq.~6.4.1]{abramowitz1964handbook} by parts gives $\xi(y)=1+\int_0^\infty e^{-yt}h'(t)\,dt$ with $h(t)=t/(1-e^{-t})$ increasing, so also $\xi>1$; and every $\alpha_i<\alpha_0$. Hence
\begin{equation*}
  \frac{dD}{d\alpha_0}\Big|_{\alpha=\alpha_0P}=\frac1{\alpha_0}\sum_ie_i\big[\xi(\alpha_i)-\xi(\alpha_0)\big]>0
\end{equation*}
wherever $e\ne0$. (ii) is Gibbs' inequality.
\end{proof}
\begin{keybox}
\begin{proposition}[The discipline sets the evidence total]\label{prop:track}
Let $q$ have full support and $\beta_u>0$. $\mathcal L_N$ has a minimiser, and every minimiser has $u^\ast=K\min_iP^\ast_i$: it is the most doubtful opinion with its own mean. Moreover:\\
(i) without experience, $N=0$, the doubt is $u^\ast=1$;\\
(ii) with experience, $u^\ast\ge K\min_iq_i$; for $q$ not uniform, $u^\ast-K\min_iq_i=\Theta(\beta_u/N)$ as $\beta_u/N\to0$.
\end{proposition}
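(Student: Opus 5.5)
The plan is to split the minimisation into two nested problems, using the parametrisation of Prop.~\ref{prop:blind}. An opinion is fixed by its mean $P$ (in the open simplex, since $q$ has full support and $\mathrm{KL}(q\Vert P)$ is infinite otherwise) and its strength $\alpha_0\ge t_P=W/(K\min_iP_i)$. The fit $N\,\mathrm{KL}(q\Vert P)$ depends on $P$ alone. So I will first minimise over $\alpha_0$ at fixed $P$, and then over $P$.

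\textbf{Inner problem.} By Lemma~\ref{lem:disc}(i), at fixed $P$ the discipline $D$ is strictly increasing in $\alpha_0$ on $[t_P,\infty)$. It is therefore minimised only at $\alpha_0=t_P$, that is, at doubt $u=W/\alpha_0=K\min_iP_i=m$. This is the most doubtful opinion with mean $P$ allowed by Prop.~\ref{prop:blind}(i). Hence every minimiser, if one exists, satisfies $u^\ast=K\min_iP^\ast_i$.

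\textbf{Outer problem and existence.} Write $\Phi(P)=D\big((W/m_P)P\big)$ with $m_P=K\min_i P_i$. This is continuous on the open simplex. The reduced objective is $F(P)=N\,\mathrm{KL}(q\Vert P)+\beta_u\Phi(P)$. On the closed simplex I will show $F\to\infty$ at the boundary, or when $N=0$ show that $F$ is minimised at uniform $P$ directly. This gives a minimiser by compactness.

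\textbf{Case (i), $N=0$.} The objective is $\beta_u D\ge0$. By Lemma~\ref{lem:disc}(ii) it vanishes only at $e=0$, so the unique minimiser is the vacuous opinion with $u^\ast=1$.

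\textbf{Case (ii), $N\ge1$.} For the bound $u^\ast\ge K\min_i q_i$, I would argue by exchange. Suppose $\min_iP^\ast_i<\min_iq_i$. Mixing $P^\ast$ toward $q$, or raising the small coordinates, lowers the fit, since $\mathrm{KL}(q\Vert\cdot)$ is convex with minimum at $q$. It also does not raise $\Phi$, because $\Phi$ should be decreasing in $m_P$: a larger minimum means a smaller total $t_P-W$, and $D$ grows with the evidence.

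This monotonicity of $\Phi$ along the segment is the main obstacle. At fixed $\alpha_0$, $D$ changes with the direction of $P$, so Lemma~\ref{lem:disc} does not suffice alone. I would first try the segment $P_s=(1-s)P^\ast+sq$. Along it $m_{P_s}$ increases and the evidence $e=(W/m)P-(W/K)\mathbf 1$ can be tracked explicitly. I would then show $d\Phi/ds\le0$ using $\xi$ decreasing as in the lemma's proof.

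For the rate, at small $\beta_u/N$ the minimiser $P^\ast$ is a perturbation of $q$. I would set the first-order condition $N\nabla\mathrm{KL}(q\Vert P)+\beta_u\nabla\Phi=0$ on the simplex. The Hessian of the fit at $q$ is $N\,\mathrm{diag}(1/q)$, which is nondegenerate. So $\|P^\ast-q\|=\Theta(\beta_u/N)$ provided $\nabla\Phi(q)$ has a nonzero component raising $\min_iP_i$. That holds when $q$ is not uniform, because then $t_q>W$, the evidence is nonzero, and lowering the total strictly decreases $D$. If $q$ is uniform, $\Phi$ already attains zero there and the perturbation degenerates, which explains the exclusion. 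Since $u^\ast-K\min_iq_i=K(\min_iP^\ast_i-\min_iq_i)$ to first order, this gives the $\Theta(\beta_u/N)$ claim. One care point remains: where $\min$ is attained at several indices, $\Phi$ is only directionally differentiable, so I would use one-sided derivatives there.
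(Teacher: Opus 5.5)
Your reduction to the mean, the existence argument and case (i) coincide with the paper's proof; the gap is in (ii). Your exchange rests on the inequality $d\Phi/ds\le0$ along $P_s=(1-s)P^\ast+sq$, and that inequality is false. Let $j$ be the unique minimal class of $P^\ast$, let $d=q-P^\ast$, and let $\alpha_0(s)=W/(KP_{s,j})$. Then
\[
\frac{d\Phi}{ds}\Big|_{s=0}=\alpha_0\sum_ie_i\,\psi'(\alpha_i)\,d_i-\frac{d_j}{P^\ast_j}\sum_ie_i\big[\xi(\alpha_i)-\xi(\alpha_0)\big].
\]
The second term is Lemma~\ref{lem:disc} applied to the falling total. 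The first term comes from the change of direction and has no sign.

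Here is a counterexample. Take $K=3$, $W=2$, $P^\ast=(0.01,\,0.97,\,0.02)$ and $q=(0.011,\,0.977,\,0.012)$. Then $\min_iP^\ast_i<\min_iq_i$, and class 1 stays the strict minimum along the whole segment. Here $e=(0,\,64,\,0.67)$, and the two terms are about $+0.076$ and $-0.032$. So $\Phi$ rises even though $m_{P_s}$ rises and the total falls.

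The mechanism is that $\partial D/\partial\alpha_i=e_i\psi'(\alpha_i)-S\psi'(\alpha_0)$ is negative on weakly supported classes (about $-0.25$ for class 3 here). Shrinking their evidence therefore raises $D$: ``$D$ grows with the evidence'' holds only at a fixed mean. The fall of the fit outweighs this rise only when $\beta_u/N$ is small, whereas $u^\ast\ge K\min_iq_i$ is claimed for every $\beta_u>0$ and $N\ge1$. So the segment argument cannot be repaired without first-order information at the minimiser itself.

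That information is the missing idea: leave the minimal-total manifold and perturb the class your inner step has already emptied. Since $e^\ast_j=0$ lies on the boundary of $e\ge0$, raising $e_j$ is feasible and cannot lower $\mathcal L_N$. This move raises the total yet lowers $D$, because $\partial D/\partial\alpha_j=-S^\ast\psi'(\alpha^\ast_0)<0$. With $\kappa=\beta_u/N$, the one-sided condition gives
\[
N\Big(\frac1{\alpha^\ast_0}-\frac{Kq_j}{W}\Big)-\beta_uS^\ast\psi'(\alpha^\ast_0)\ge0
\;\Longrightarrow\;
1-\frac{Kq_j}{u^\ast}\ge\kappa S^\ast\xi(\alpha^\ast_0)>\kappa S^\ast .
\]
Hence $u^\ast>Kq_j/(1-\kappa S^\ast)\ge K\min_iq_i\,(1+\kappa S^\ast)$ for non-uniform $q$ (then $S^\ast>0$; uniform $q$ gives $e^\ast=0$). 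This is the global bound. It is also the $\Omega(\kappa)$ half of the rate, since $S^\ast$ stays bounded away from zero as $P^\ast\to q$.

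The $O(\kappa)$ half needs no implicit-function argument. Comparing with the opinion of mean $q$ and doubt $K\min_iq_i$ gives $\mathrm{KL}(q\Vert P^\ast)\le\kappa[\Phi(q)-\Phi(P^\ast)]$. Pinsker's inequality and the local Lipschitz continuity of $\Phi$ then yield $\lVert P^\ast-q\rVert_1=O(\kappa)$. This also supplies the convergence $P^\ast\to q$ that your perturbation sketch takes for granted.

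That sketch has two further needs it does not meet. It needs $\Phi$ to be differentiable at $q$, which fails when $\min_iq_i$ is attained twice. It also needs the sign of the first-order shift of the minimal coordinate, which ``lowering the total decreases $D$'' does not establish.
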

\end{keybox}
\begin{proof}
\textit{Idea.} The fit sees only the mean and the discipline grows with the total, so the minimiser takes the least total its mean allows; comparing it with the opinion that fits $q$ exactly then bounds the excess. Let $F(P)=\mathrm{KL}(q\,\Vert\,P)$, $D_{\min}(P)=D(t_PP)$ and $\kappa=\beta_u/N$.

\textit{Reduce to the mean.} Every opinion with mean $P$ has $\alpha_0\ge t_P$, so by Lemma~\ref{lem:disc}(i)
\begin{equation*}
  \mathcal L_N(e)\ \ge\ \Phi_N(P):=N\,F(P)+\beta_u\,D_{\min}(P),
\end{equation*}
with equality exactly at $\alpha_0=t_P$. A minimiser therefore has the smallest total its mean allows: its least likely class holds no evidence, and $u^\ast=K\min_iP^\ast_i$. $\Phi_N$ is continuous and, as $q$ has full support, grows without bound as $\min_iP_i\to0$, so a minimiser exists.

\textit{(i).} At $N=0$ the loss is $\beta_uD$, which vanishes only at $e=0$ (Lemma~\ref{lem:disc}(ii)).

\textit{(ii), upper bound.} The opinion with mean $q$ and doubt $K\min_iq_i$ (Prop.~\ref{prop:blind}) has loss $\beta_uD_{\min}(q)$, so $F(P^\ast)\le\kappa\,[D_{\min}(q)-D_{\min}(P^\ast)]\le\kappa\,D_{\min}(q)$, and $P^\ast\to q$ as $\kappa\to0$. Near $q$, $D_{\min}$ is Lipschitz and $F\ge\frac12\lVert P^\ast-q\rVert_1^2$ (Pinsker), so $\lVert P^\ast-q\rVert_1$ and hence $u^\ast-K\min_iq_i\le K\lVert P^\ast-q\rVert_1$ are $O(\kappa)$.

\textit{(ii), lower bound.} Let $N>0$ and $q$ be non-uniform (for uniform $q$, $e^\ast=0$ and $u^\ast=1=K\min_iq_i$); then $S^\ast>0$. At the class $j$ left without evidence, raising $e_j$ cannot lower the loss, the first inequality below; multiplying it by $\alpha^\ast_0/N$ gives the second:
\begin{equation*}
  N\Big(\frac1{\alpha^\ast_0}-\frac{Kq_j}{W}\Big)-\beta_uS^\ast\psi'(\alpha^\ast_0)\ \ge\ 0
  \quad\Longrightarrow\quad
  1-\frac{Kq_j}{u^\ast}\ \ge\ \kappa\,S^\ast\,\xi(\alpha^\ast_0)\ >\ \kappa\,S^\ast,
\end{equation*}
\looseness=-1 using $\xi>1$ (proof of Lemma~\ref{lem:disc}). Hence $u^\ast=KP^\ast_j>Kq_j/(1-\kappa S^\ast)\ge K\min_iq_i\,(1+\kappa S^\ast)$. By the upper bound $u^\ast\to K\min_iq_i$, so $K\min_iq_i\,S^\ast\to W(1-K\min_iq_i)>0$ and the excess is at least of order $\kappa$.
\end{proof}
\begin{exbox}
\textbf{Example.} With $\beta_u=0.1$ the discipline settles the pair at $u^\ast\approx0.24$ against the count's $0.2$, and returns $u^\ast=1$ for a pair never taken.
\end{exbox}
\paragraph{In words.} When the target is the posterior mean of Prop.~\ref{prop:count} after $N$ transitions, $K\min_iq_i\ge W/(W+N)$, the doubt of Eq.~\eqref{eq:count}, with equality when some class was never observed. By (ii), the discipline then never leaves an experienced pair more confident than the count, and errs toward doubt: by an excess of order $\beta_u/N$, and by more where every class was observed. Regularised evidential learners take, among equal fits, the one the regulariser prefers \citep[Sec.~3.4]{jurgens2024faithful}, and their uncertainty is then set by the regularisation weight \citep{bengs2022pitfalls,jurgens2024faithful}; here the weight sets only an excess of order $\beta_u/N$ above a level fixed by the target.

With a network, the evidence of every pair is one function $e(s_t,a_t)$ of shared parameters. Posterior Networks keep the count of Prop.~\ref{prop:count} in such a function with a learned density, setting the evidence of a class to its count times the density at the input \citep{charpentier2020posterior}; the lucid head needs no density model. The fit rewards a direction that generalises across pairs but never rewards evidence away from the transitions that demand it, so through the shared parameters the discipline lowers the total wherever no transition holds it up. Sec.~\ref{sec:exp} measures this on four bases: after observation and before decision (Table~\ref{tab:main}), along a rollout that leaves the experienced states (Fig.~\ref{fig:cloud}), and as experience accumulates (Fig.~\ref{fig:growth}).

\subsection{The doubt sets the expected error}\label{app:error}
\begin{keybox}
\begin{proposition}[Doubt sets the expected error of the prediction]\label{prop:error}
Let $\boldsymbol\omega\sim\mathrm{Dir}(\alpha)$ be the model's opinion on the distribution of a next latent variable, with mean $P$ and doubt $u$. Then
\begin{equation}
  \mathbb E\,\lVert\boldsymbol\omega-P\rVert_2^2=\big(1-\lVert P\rVert_2^2\big)\,\frac{u}{W+u},
  \label{eq:doubt-error}
\end{equation}
the share $u/(W+u)$ of the largest expected error, $1-\lVert P\rVert_2^2$, that any belief with mean $P$ can carry. For a fixed prediction it rises strictly with the doubt; a standard head gives every transition with that prediction the same share.
\end{proposition}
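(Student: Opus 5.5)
The plan is to compute the left-hand side coordinatewise from the Dirichlet covariance, and then rewrite the strength $\alpha_0$ through the doubt using Eq.~\eqref{eq:opinion}. For $\boldsymbol\omega\sim\mathrm{Dir}(\alpha)$ with $\alpha_0=W+S$ and $P=\alpha/\alpha_0$, the standard Dirichlet moments give $\mathrm{Var}(\omega_i)=\alpha_i(\alpha_0-\alpha_i)/\big(\alpha_0^2(\alpha_0+1)\big)=P_i(1-P_i)/(\alpha_0+1)$. Summing over $i$ and using $\sum_iP_i=1$ then gives
\begin{equation*}
  \mathbb E\,\lVert\boldsymbol\omega-P\rVert_2^2=\sum_i\mathrm{Var}(\omega_i)=\frac{1-\lVert P\rVert_2^2}{\alpha_0+1}.
\end{equation*}
Next, as in App.~\ref{app:dirichlet}, I substitute $\alpha_0=W/u$, so that $1/(\alpha_0+1)=u/(W+u)$; this yields Eq.~\eqref{eq:doubt-error}. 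Here $u$ is the opinion's own uncertainty mass $W/\alpha_0$. The floor at $\varepsilon$ from Sec.~\ref{sec:doubt} only selects which opinions occur, and the identity holds for each of them.

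For the interpretation of $1-\lVert P\rVert_2^2$ as the largest expected error, I take an arbitrary random vector $\boldsymbol\omega$ on the simplex with mean $P$. Expanding the square gives $\mathbb E\lVert\boldsymbol\omega-P\rVert_2^2=\mathbb E\lVert\boldsymbol\omega\rVert_2^2-\lVert P\rVert_2^2$. Since $0\le\omega_i\le1$, we have $\lVert\boldsymbol\omega\rVert_2^2\le\sum_i\omega_i=1$, so this is at most $1-\lVert P\rVert_2^2$. Equality holds exactly when $\boldsymbol\omega$ sits on the vertices, i.e.\ the mixture $\sum_iP_i\delta_{\mathbf e_i}$, which is the $\alpha_0\to0$ limit of the Dirichlet family. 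The Dirichlet therefore carries the share $u/(W+u)\in(0,1/(W+1)]$ of this maximum.

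Monotonicity follows because $u\mapsto u/(W+u)$ has derivative $W/(W+u)^2>0$. Any opinion with the fixed mean $P$ is determined by its doubt (Prop.~\ref{prop:blind}(i)), so at fixed $P$ the expected error rises strictly with $u$ over the admissible interval $(0,K\min_iP_i]$. For the final clause, Prop.~\ref{thm:two} pins a standard head at $u\equiv\varepsilon$. Every transition with prediction $P$ then gets the same error $(1-\lVert P\rVert_2^2)\,\varepsilon/(W+\varepsilon)$, whatever its experience. Since $1-\lVert P\rVert_2^2$ depends on the prediction alone, the share $\varepsilon/(W+\varepsilon)$ is the same for every transition.

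None of these steps is a serious obstacle; the only care needed is in quoting the Dirichlet covariance correctly and in keeping the algebra $\alpha_0+1=(W+u)/u$ straight. The one point worth stating explicitly is that "largest expected error" ranges over all beliefs on the simplex with mean $P$, not only Dirichlets. I would phrase the bound for arbitrary distributions, note that it is attained only in the vertex-mixture limit, and add that any Dirichlet with finite $W$ stays strictly below it.
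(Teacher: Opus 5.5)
Your proposal is correct and follows the paper's proof essentially step for step. You sum the Dirichlet variances $P_i(1-P_i)/(\alpha_0+1)$, substitute $\alpha_0+1=(W+u)/u$, bound any belief with mean $P$ via $\mathbb E\lVert\boldsymbol\omega\rVert_2^2\le1$ with equality on the vertices, and use the standard head's constant $\alpha_0=W/\varepsilon$. The one point the paper makes explicit that you leave implicit is $1-\lVert P\rVert_2^2>0$, which holds because every $\alpha_i>0$ and $K\ge2$; this is what makes the rise in $u$ strict rather than merely nondecreasing.
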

\end{keybox}
\begin{proof}
A Dirichlet has $\mathrm{Var}[\omega_i]=P_i(1-P_i)/(\alpha_0+1)$, so $\mathbb E\lVert\boldsymbol\omega-P\rVert_2^2=(1-\lVert P\rVert_2^2)/(\alpha_0+1)$, and $\alpha_0+1=(W+u)/u$. Any belief with mean $P$ has $\mathbb E\lVert\boldsymbol\omega-P\rVert_2^2=\mathbb E\lVert\boldsymbol\omega\rVert_2^2-\lVert P\rVert_2^2\le1-\lVert P\rVert_2^2$, with equality for the belief on the vertices. For fixed $P$, $1-\lVert P\rVert_2^2>0$ and $u/(W+u)$ rises with $u$; a standard head has $\alpha_0=W/\varepsilon$ everywhere.
\end{proof}
\begin{exbox}
\textbf{Example.} Here $1-\lVert P\rVert_2^2=0.48$: the expected error is $0.044$ at the count, $S=8$, but $0.0024$ at $S=198$, eighteen times smaller for the same prediction (Fig.~\ref{fig:family}c).
\end{exbox}
The error is taken under the model's own opinion; Fig.~\ref{fig:growth} tests whether that opinion is calibrated, and App.~\ref{app:tighter} says what calibration buys in the return.

\section{Why trust is carried in the return}\label{app:proofs}
\newtheorem*{thmrestated}{Theorem~\ref*{thm:guar}}
Section~\ref{sec:trust} carries the doubt of each imagined step as trust in the return, and Sec.~\ref{sec:decide} learns and decides by that trust. Carried through the imagined state instead, the doubt of a step would be lost one step later; carried in the return, it compounds, keeps the critic's fixed point and discounts every model error that follows a doubtful step. Table~\ref{tab:appd} lays out the argument; Apps.~\ref{app:fusion}--\ref{app:tighter} each give an example in a green box, and Apps.~\ref{app:lineage}--\ref{app:tighter} follow one imagined rollout.
\begin{table}[h]
  \caption{The argument of App.~\ref{app:proofs}, step by step.}
  \label{tab:appd}
  \centering\small\renewcommand{\arraystretch}{1.12}
  \begin{tabular}{@{}l>{\raggedright\arraybackslash}p{0.722\linewidth}l@{}}
    \toprule
    step & result & where\\
    \midrule
    start & at the real state, fusing in an observation never raises the doubt & Prop.~\ref{prop:seeing}\\
    alternative & carried in the state, the doubt of a step is lost one step later & Prop.~\ref{prop:lineage}\\
    choice & carried in the return, trust compounds, as SL trust discounting does & Prop.~\ref{prop:discount}\\
    safety & the lucid return keeps the critic's fixed point & Thm.~\ref{thm:guar}\\
    payoff & errors that follow a doubtful step are discounted, where it pays & Prop.~\ref{prop:critic}\\
    action & doubt ranks candidate actions before any is taken & Prop.~\ref{prop:acting}\\
    \bottomrule
  \end{tabular}
\end{table}

\paragraph{Notation.} Symbols are those of Table~\ref{tab:notation}. A rollout starts at a real state $s_0$ and takes $H$ imagined steps; step $k$ has the doubt $\bar u_k\ge\varepsilon$ of the transition $(s_{k-1},a_{k-1})$ and the trust $\tau_k\in[0,1]$, and $C_0=1$. We take $\lambda\in(0,1]$ and $\gamma\in(0,1)$, and write $M_n=\lambda^nC_n$ for the share of the return that reaches step $n$. A doubt profile is \emph{fixed} when $\bar u_1,\dots,\bar u_H$ are the same on every imagined rollout. Prop.~\ref{prop:seeing} uses the doubt before the floor, $u=W/(W+S)$, as in App.~\ref{app:count}.

\subsection{Observation never raises the doubt}\label{app:fusion}
\begin{keybox}
\begin{proposition}[Fusion never raises the doubt]\label{prop:seeing}
Fuse a prediction (evidence $e$, doubt $u$) with an observation (evidence $e_{\mathrm{obs}}$, total $S_{\mathrm{obs}}$, doubt $u_{\mathrm{obs}}=W/(W+S_{\mathrm{obs}})$) by cumulative fusion \citep[Sec.~12.3]{josang2016subjective}, $e_{\mathrm{post}}=e+e_{\mathrm{obs}}$, and let $u_{\mathrm{post}}$ and $P_{\mathrm{post}}$ be the doubt and mean of the result. Then\\
(i) $u_{\mathrm{post}}\le\min(u,u_{\mathrm{obs}})$, and $1/u_{\mathrm{post}}=1/u+S_{\mathrm{obs}}/W$;\\
(ii) $P_{\mathrm{post}}=(1-\theta)\,P+\theta\,\hat p_{\mathrm{obs}}$, with $\hat p_{\mathrm{obs}}=e_{\mathrm{obs}}/S_{\mathrm{obs}}$ and $\theta=1-u_{\mathrm{post}}/u$ the share of the doubt the observation removes; a vacuous observation, $S_{\mathrm{obs}}=0$, has $\theta=0$ and leaves the prediction unchanged.
\end{proposition}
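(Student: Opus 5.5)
The plan is a direct computation from the opinion algebra of Eq.~\eqref{eq:opinion}, using that cumulative fusion adds evidence, so the totals add: $S_{\mathrm{post}}=S+S_{\mathrm{obs}}$ and $\alpha_{0,\mathrm{post}}=W+S+S_{\mathrm{obs}}$. As in App.~\ref{app:count}, the doubt used here is the unfloored $u=W/(W+S)$.

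For (i), I would write $1/u=(W+S)/W=1+S/W$. Then
\begin{equation*}
1/u_{\mathrm{post}}=1+(S+S_{\mathrm{obs}})/W=1/u+S_{\mathrm{obs}}/W.
\end{equation*}
Because $S_{\mathrm{obs}}\ge0$, this gives $u_{\mathrm{post}}\le u$. By the symmetric identity $1/u_{\mathrm{post}}=1/u_{\mathrm{obs}}+S/W$ with $S\ge0$, it also gives $u_{\mathrm{post}}\le u_{\mathrm{obs}}$. Together these yield the minimum bound.

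For (ii), I would expand the fused mean directly. Write $P_{\mathrm{post}}=\alpha_{\mathrm{post}}/\alpha_{0,\mathrm{post}}$ with $\alpha_{\mathrm{post}}=\alpha+e_{\mathrm{obs}}=(W+S)P+S_{\mathrm{obs}}\hat p_{\mathrm{obs}}$. Dividing by $W+S+S_{\mathrm{obs}}$ gives a convex combination of $P$ and $\hat p_{\mathrm{obs}}$ whose weight on $\hat p_{\mathrm{obs}}$ is $\theta=S_{\mathrm{obs}}/(W+S+S_{\mathrm{obs}})$. It remains to identify this weight with $1-u_{\mathrm{post}}/u$. I would do this with
\begin{equation*}
u_{\mathrm{post}}/u=(W+S)/(W+S+S_{\mathrm{obs}}),
\end{equation*}
so that $1-u_{\mathrm{post}}/u=S_{\mathrm{obs}}/(W+S+S_{\mathrm{obs}})=\theta$.

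The vacuous case is the only delicate point, and I expect it to be the main obstacle because $\hat p_{\mathrm{obs}}$ is undefined when $S_{\mathrm{obs}}=0$. I would handle it separately. Then $e_{\mathrm{obs}}=0$, so $e_{\mathrm{post}}=e$, $u_{\mathrm{post}}=u$, $\theta=0$ and $P_{\mathrm{post}}=P$, with the convention that the term $\theta\hat p_{\mathrm{obs}}$ vanishes. Likewise, if the prediction itself is vacuous ($S=0$), its own direction $\hat p$ is undefined, but the argument only uses $P$ and $\alpha$, so nothing breaks there.
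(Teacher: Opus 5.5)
Your proposal is correct and follows the paper's proof essentially line for line: the identity $W/u_{\mathrm{post}}=W+S+S_{\mathrm{obs}}=W/u+S_{\mathrm{obs}}$ (yours divided by $W$), together with the doubt falling as the total grows, gives (i). Writing $\alpha_{\mathrm{post}}=\alpha+e_{\mathrm{obs}}=(W+S)P+S_{\mathrm{obs}}\hat p_{\mathrm{obs}}$ and matching $S_{\mathrm{obs}}/(W+S+S_{\mathrm{obs}})$ with $1-u_{\mathrm{post}}/u$ gives (ii); your separate handling of $S_{\mathrm{obs}}=0$, where $\hat p_{\mathrm{obs}}$ is undefined, is a bit of care the paper leaves implicit.
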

\end{keybox}
\begin{proof}
The total $S+S_{\mathrm{obs}}$ is at least $S$ and at least $S_{\mathrm{obs}}$, and the doubt falls with the total; $W/u_{\mathrm{post}}=W+S+S_{\mathrm{obs}}=W/u+S_{\mathrm{obs}}$. For (ii), $\alpha_{\mathrm{post}}=\alpha+e_{\mathrm{obs}}$, so $P_{\mathrm{post}}=\big((W+S)\,P+S_{\mathrm{obs}}\,\hat p_{\mathrm{obs}}\big)/(W+S+S_{\mathrm{obs}})$, and $S_{\mathrm{obs}}/(W+S+S_{\mathrm{obs}})=1-u_{\mathrm{post}}/u$.
\end{proof}
\begin{exbox}
\textbf{Example.} Fuse the example pair of App.~\ref{app:count}, $e=(6,2,0)$ and $u=0.2$, with an observation $e_{\mathrm{obs}}=(4,0,0)$, $u_{\mathrm{obs}}=0.33$. Then $e_{\mathrm{post}}=(10,2,0)$ and $u_{\mathrm{post}}=0.14$, below both. The observation removes $\theta=0.29$ of the doubt and moves $P\approx(0.67,\,0.27,\,0.07)$ 29\% of the way to $(1,0,0)$, to $P_{\mathrm{post}}\approx(0.76,\,0.19,\,0.05)$.
\end{exbox}
These identities are specific to cumulative fusion. The reduced Dempster rule of evidential multi-view learning also never raises the uncertainty mass \citep[Prop.~3.3]{han2023tpami}, while averaging fusion, which averages the evidence of the views \citep{xu2024rcml}, can raise it above the smaller of the two.

\subsection{In the state, the lineage is lost}\label{app:lineage}
One could carry the doubt in the imagined state, deducing the opinion on each imagined state from the opinion on the state before, as the deduction of SL does \citep[Ch.~9]{josang2016subjective}.
\begin{keybox}
\begin{proposition}[Deduction forgets the lineage]\label{prop:lineage}
Deduce the opinion on each imagined state from the opinion on the one before, which has doubt $\tilde u_{k-1}$, with $\tilde u_0=0$ at the real state. Take the doubt of the step's conditionals as $\bar u_k$, the only doubt the head reports. Then
\begin{equation*}
  \tilde u_k=\bar u_k+\sigma_k\,\tilde u_{k-1}=\sum_{j=1}^{k}\Big(\prod_{l=j+1}^{k}\sigma_l\Big)\,\bar u_j,\qquad \sigma_k=u^\circ_k-\bar u_k,
\end{equation*}
where $u^\circ_k$ is the doubt that deduction assigns to state $k$ when state $k-1$ is vacuous. So $\tilde u_1=\bar u_1$, $\lvert\tilde u_k-\bar u_k\rvert=\lvert\sigma_k\rvert\,\tilde u_{k-1}\le\lvert\sigma_k\rvert$ at every depth, and the doubt of an earlier step $j$ reaches depth $k$ only through the factor $\prod_{l=j+1}^{k}\sigma_l$: where $\lvert\sigma\rvert$ is small, a doubtful step is forgotten one step later.
\end{proposition}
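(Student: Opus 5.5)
The plan is to first establish the one-step recursion $\tilde u_k=\bar u_k+\sigma_k\tilde u_{k-1}$. Everything else then follows by unrolling it and by elementary bounds.

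For the one-step recursion I would use the structure of SL deduction \citep[Ch.~9]{josang2016subjective}. The deduced opinion on $Y$ (state $k$) from an antecedent opinion on $X$ (state $k-1$) with belief masses $b_{x,i}$ and uncertainty $\tilde u_{k-1}$ has an uncertainty mass that is affine in the antecedent's masses. Each belief mass $b_{x,i}$ carries the uncertainty of its conditional $\omega_{Y\mid x_i}$. The antecedent's uncertainty mass carries the uncertainty $u^\circ_k$ of the sub-simplex apex, which is the opinion deduced from a vacuous antecedent. This gives
\begin{equation*}
  \tilde u_k=\sum_i b_{x,i}\,u_{Y\mid x_i}+\tilde u_{k-1}\,u^\circ_k .
\end{equation*}
By hypothesis every conditional has doubt $\bar u_k$, since this is the only doubt the head reports. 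Because $\sum_i b_{x,i}=1-\tilde u_{k-1}$, the identity becomes $\tilde u_k=(1-\tilde u_{k-1})\bar u_k+\tilde u_{k-1}u^\circ_k=\bar u_k+(u^\circ_k-\bar u_k)\tilde u_{k-1}$, which is the stated recursion with $\sigma_k=u^\circ_k-\bar u_k$.

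The main obstacle is this step. Jøsang's deduction is defined geometrically, through the apex of the sub-simplex and a projection of the antecedent's projected probability, and it is not written in closed form for the uncertainty mass. I would first check the affine form above against the construction in the binomial case. I would then argue that the construction is affine in the antecedent's barycentric coordinates $(b_{x,1},\dots,b_{x,K},u_x)$, because the deduced opinion is the image of the antecedent under the linear map sending each vertex $x_i$ to $\omega_{Y\mid x_i}$ and the vacuous point to the apex. If Jøsang's apex rule makes $u^\circ_k$ depend on the conditionals, that causes no problem, since $u^\circ_k$ is left abstract in the statement.

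With the recursion in hand, the remaining steps are short. Unrolling from $\tilde u_0=0$ by induction on $k$ gives $\tilde u_k=\sum_{j=1}^k\big(\prod_{l=j+1}^k\sigma_l\big)\bar u_j$. At $k=1$ the recursion gives $\tilde u_1=\bar u_1+\sigma_1\cdot0=\bar u_1$. The recursion also gives $\tilde u_k-\bar u_k=\sigma_k\tilde u_{k-1}$. Since $\tilde u_{k-1}$ is an uncertainty mass, it lies in $[0,1]$, so $\lvert\tilde u_k-\bar u_k\rvert\le\lvert\sigma_k\rvert$. Finally, the coefficient multiplying $\bar u_j$ in $\tilde u_k$ is exactly $\prod_{l=j+1}^k\sigma_l$. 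When $\lvert\sigma\rvert$ is small, this coefficient is already small at $k=j+1$, so the doubt of step $j$ is forgotten one step later.
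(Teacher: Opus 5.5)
Your proposal is correct and follows essentially the paper's proof: use the affine form $u_{Y\Vert X}=u_X\,u_{Y\Vert\hat X}+\sum_i u_{Y|x_i}\,b_X(x_i)$ with $\sum_i b_X(x_i)=1-\tilde u_{k-1}$, unroll from $\tilde u_0=0$, and bound with $0\le\tilde u_{k-1}\le1$. The step you flag as the main obstacle is not one, since J{\o}sang's book gives this uncertainty formula in closed form as Eq.~(9.75), which the paper cites directly. Your barycentric-map argument is a valid re-derivation of that formula.
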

\end{keybox}
\begin{proof}
Deduction gives the deduced opinion the doubt $u_{Y\Vert X}=u_X\,u_{Y\Vert\hat X}+\sum_iu_{Y|x_i}\,b_X(x_i)$ \citep[Eq.~(9.75)]{josang2016subjective}, where $u_X$ and $b_X$ are the doubt and belief masses of the antecedent and $u_{Y\Vert\hat X}$ is the doubt deduced from a vacuous antecedent. With $u_X=\tilde u_{k-1}$, $u_{Y\Vert\hat X}=u^\circ_k$, $u_{Y|x_i}=\bar u_k$ and $\sum_ib_X(x_i)=1-\tilde u_{k-1}$, this is $\tilde u_k=u^\circ_k\,\tilde u_{k-1}+\bar u_k\,(1-\tilde u_{k-1})$, the first form. Unrolling from $\tilde u_0=0$ gives the second, and $\tilde u_{k-1}\le1$ gives the bound.
\end{proof}
\begin{exbox}
\textbf{Example.} Take a rollout of $H=8$ imagined steps that leaves experience at steps 3 and 4, $\bar u_3=\bar u_4=0.40$, and is familiar elsewhere, $\bar u_k=0.01$; with $\varepsilon=0.01$ the trusts are $0.61$ at steps 3 and 4 and $1$ elsewhere. With $\lvert\sigma_k\rvert\le0.01$, deduction gives states 3 and 4 the doubt $0.40$ and states 5 to 8 the doubt $0.01$, each to within $0.005$: states 5 to 8 are trusted again (Fig.~\ref{fig:trustdepth}a).
\end{exbox}
\paragraph{In words.} On a trained model on cheetah run, $\sigma=0.009$ at the median, and exact deduction gives the same doubt at depth fifteen as at depth one, where compounding would have raised it. The recursion settles after one step at a level set by the current step, and the lineage of the rollout is lost.

\subsection{In the return, the doubt compounds}\label{app:discount}
Carried in the return instead, trust compounds along the rollout.
\begin{keybox}
\begin{proposition}[Carried trust is the trust discounting of SL]\label{prop:discount}
Read the $k$-th imagined step as an advisor in a referral chain that starts at the real state $s_0$, and its trust $\tau_k$ as the projected probability of the binomial opinion that this step is reliable. Then the carried trust $C_n=\prod_{k=1}^{n}\tau_k$ is the projected probability that the transitive trust discounting of SL \citep[Sec.~14.3.4, Def.~14.7]{josang2016subjective} assigns to the chain of the first $n$ steps, and an opinion with belief masses $b$ reported at depth $n$ is discounted to belief $C_n\,b$ and uncertainty $1-C_n\sum_i b_i$. With $\tau\equiv1$ the chain is fully trusted and nothing is discounted.
\end{proposition}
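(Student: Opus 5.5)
The proof unpacks the SL trust-discounting operator of \citet[Def.~14.7]{josang2016subjective} and checks that, applied step by step along the referral chain, its projected probabilities multiply. We take the definition in its probability-sensitive form. If an analyst holds a binomial trust opinion $\omega_A^{B}$ in an advisor $B$, with projected probability $P_A^B=b_A^B+a_A^B u_A^B$, and $B$ reports an opinion $\omega_B^X$ with belief masses $b_B^X$ and uncertainty $u_B^X$, then the discounted opinion has
\begin{equation*}
  b^{[A;B]}_X=P_A^B\,b_B^X,\qquad u^{[A;B]}_X=1-P_A^B\sum_i b_B^X(x_i).
\end{equation*}
For a chain $A\to B_1\to\dots\to B_n\to X$, the transitive form of the same definition discounts by the product $\prod_k P^{B_k}_{B_{k-1}}$ of the projected probabilities of the trust opinions along the chain.

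The plan has three steps.

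\emph{Step 1: model the chain.} Identify the analyst with the real state $s_0$ and the $k$-th advisor with the $k$-th imagined step. The trust opinion in advisor $k$ is taken to be the binomial opinion "step $k$ is reliable", with projected probability $\tau_k$. One concrete choice is belief $\tau_k$, disbelief $1-\tau_k$ and uncertainty $0$. The result only needs the projected probability to equal $\tau_k$, and $\tau_k\in[0,1]$ holds because $\bar u_k\ge\varepsilon$.

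\emph{Step 2: induct on $n$.} We show that the chain of the first $n$ steps has projected probability $C_n$. The base case $n=0$ is the empty chain, with $C_0=1$. For the inductive step, appending advisor $n$ to a chain of projected probability $C_{n-1}$ multiplies it by $\tau_n$ under the definition, giving $C_n=C_{n-1}\tau_n$.

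\emph{Step 3: discount a reported opinion.} Apply the displayed discounting with $P_A^B$ replaced by $C_n$ to an opinion with beliefs $b$ reported at depth $n$. This yields belief $C_n b$ and uncertainty $1-C_n\sum_i b_i$. Additivity is preserved, since beliefs plus uncertainty sum to one. With $\tau\equiv1$ we get $C_n=1$, and the reported opinion passes through unchanged.

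The main obstacle is Step 2: confirming that Jøsang's multi-edge transitive discounting really multiplies projected probabilities, rather than belief masses, as older versions of the operator do. I would first check the two-edge case directly from Def.~14.7. There the trust in the second advisor is itself discounted by the first, so its projected probability becomes $P_1P_2$; the general case then follows by the same induction. If the definition in the cited section turns out to be stated with beliefs instead, the uncertainty-free choice of trust opinion in Step 1 makes belief and projected probability coincide, and the identity $C_n=\prod_k\tau_k$ still holds.
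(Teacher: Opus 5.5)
Your main line matches the paper's proof, which is a two-sentence citation. Multi-edge discounting uses the product of the edges' projected probabilities (the book's Eq.~(14.13)) and scales the reported beliefs by that product (Def.~14.7). With edge $k$ at $\tau_k$ the product is $C_n$, and since only projected probabilities enter, the split of $\tau_k$ into belief, disbelief and uncertainty does not matter, as you also note. Your Steps 1 and 3 are exactly this, and taking Eq.~(14.13) as the definition makes Step 2 immediate.

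The part that would not survive is the check you propose for Step 2. Discount the trust opinion $\omega_{B_1}^{B_2}$ by $\omega_A^{B_1}$ with the two-edge operator. You get belief $P_1b_2$, disbelief $P_1d_2$, uncertainty $1-P_1(1-u_2)$ and base rate $a_2$, so the projected probability is $P_1P_2+a_2(1-P_1)$. This is not $P_1P_2$ unless $a_2=0$ or $P_1=1$. Your uncertainty-free choice in Step 1 does not rescue it, because the derived trust opinion itself acquires uncertainty $1-P_1$. So composing trust opinions with one another from the analyst outward, which is how your inductive step ("appending advisor $n$ to a chain of projected probability $C_{n-1}$") reads, leaves a base-rate residue.

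The product comes instead from applying each edge to the opinion about $X$, starting from the target end. Discounting $\omega_{B_2}^X$ by $\omega_{B_1}^{B_2}$ gives beliefs $P_2b$ and uncertainty $1-P_2\sum_ib_i$. Discounting that by $\omega_A^{B_1}$ gives beliefs $P_1P_2b$ and uncertainty $1-P_1P_2\sum_ib_i$, with no base-rate term. This works because the operator reads each trust edge only through its projected probability and acts linearly on the reported beliefs. Run the induction in that form, or cite Eq.~(14.13) as the definition as the paper does. Either way, your fallback about belief-based discounting is unnecessary.
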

\end{keybox}
\begin{proof}
Transitive discounting multiplies the projected probabilities of the referral edges along the path \citep[Eq.~(14.13)]{josang2016subjective} and scales the belief of the final opinion by that product \citep[Def.~14.7]{josang2016subjective}. With edge $k$ at $\tau_k$ the product is $C_n$. Only the projected probabilities enter, so the result does not depend on how each $\tau_k$ is split into belief, disbelief and uncertainty.
\end{proof}
\begin{exbox}
\textbf{Example.} In the return, the rollout carries the trust $C_n=1$ up to step 2, $0.61$ at step 3 and $0.37$ from step 4 on: states 5 to 8 inherit the doubt of steps 3 and 4, which deduction dropped (Fig.~\ref{fig:trustdepth}a).
\end{exbox}
\paragraph{In words.} Deduction through the state and discounting along the chain are the two ways SL chains opinions. The first forgets a doubtful step once it is passed, the second carries it to every step built on it (Fig.~\ref{fig:trustdepth}a); this is why LucidWM carries doubt as trust in the return.
\begin{figure}[H]
  \centering
  \includegraphics[width=1.000\linewidth]{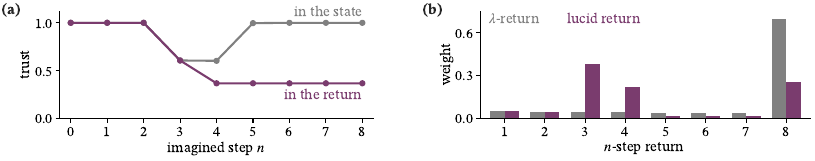}
  \caption{\textbf{Trust in the state loses the lineage; trust in the return compounds}, for the example rollout. (a)~Trust at each imagined step $n$: deduced through the state, $(1-\tilde u_n)/(1-\varepsilon)$ with $\sigma=0.009$ (Prop.~\ref{prop:lineage}), and carried in the return, $C_n$ (Prop.~\ref{prop:discount}). (b)~Weights of the one- to eight-step returns under the $\lambda$-return and the lucid return (Lemma~\ref{lem:unrolled}).}
  \label{fig:trustdepth}
\end{figure}

\subsection{The lucid return keeps the fixed point and discounts model error}\label{app:guarantees}
The lucid return mixes the $n$-step returns; Lemma~\ref{lem:unrolled} gives the mixture, and Thm.~\ref{thm:guar} what it does to learning.
\begin{lemma}[Unrolled lucid return]\label{lem:unrolled}
For every doubt profile, Eq.~\eqref{eq:return} unrolls to Eq.~\eqref{eq:unrolled}: the $n$-step return $R^{(n)}$ has weight $w_n=M_{n-1}-M_n$ for $n<H$ and $w_H=M_{H-1}$. The weights are nonnegative and sum to one, and the returns longer than $n$ steps carry total weight $M_n$. With $\tau\equiv1$, $w_n=(1-\lambda)\lambda^{n-1}$, $w_H=\lambda^{H-1}$ and $R^\tau_t=R^\lambda_t$ (Cor.~\ref{cor:return}).
\end{lemma}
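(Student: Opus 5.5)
The plan is to prove the unrolled form by backward induction on the recursion Eq.~\eqref{eq:return} from the terminal condition $R^\tau_H=v(s_H)$. The cleanest route goes through the $n$-step returns $R^{(n)}$ defined from $s_0$, using the identity $R^{(n+1)}-R^{(n)}=\gamma^{n}\big(r_n+\gamma v(s_{n+1})-v(s_n)\big)$.

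First, I would unroll the recursion into a telescoping sum of TD errors weighted by $\gamma^k M_k$. Write Eq.~\eqref{eq:return} as
\[
R^\tau_t=r_t+\gamma v(s_{t+1})+\gamma\lambda\tau_{t+1}\big(R^\tau_{t+1}-v(s_{t+1})\big).
\]
Set $D_t=R^\tau_t-v(s_t)$ and let $\delta^{v}_t=r_t+\gamma v(s_{t+1})-v(s_t)$ be the TD error. The recursion then becomes
\[
D_t=\delta^v_t+\gamma\lambda\tau_{t+1}D_{t+1},\qquad D_H=0.
\]
Induction from $t=0$ gives
\[
D_0=\sum_{k=0}^{H-1}\gamma^k\lambda^kC_k\,\delta^v_k=\sum_{k=0}^{H-1}M_k\,\gamma^k\delta^v_k.
\]

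Second, I would substitute $\gamma^k\delta^v_k=R^{(k+1)}-R^{(k)}$, with $R^{(0)}=v(s_0)$. This gives
\[
R^\tau_0=R^{(0)}+\sum_{k=0}^{H-1}M_k\big(R^{(k+1)}-R^{(k)}\big).
\]
Summation by parts collects the coefficient of each $R^{(n)}$. For $1\le n\le H-1$ it is $M_{n-1}-M_n$, and for $n=H$ it is $M_{H-1}$. The coefficient of $R^{(0)}$ is $1-M_0=0$, since $C_0=1$ and $\lambda^0=1$. This is exactly Eq.~\eqref{eq:unrolled}.

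Third come the weight properties. Each $\tau_k\in[0,1]$ because $\bar u_k\ge\varepsilon$, and $\lambda\in(0,1]$, so $M_n=\lambda^n C_n=M_{n-1}\,\lambda\tau_n\le M_{n-1}$. This makes every $w_n\ge0$. The weights telescope to $M_0=1$. The tail weight $\sum_{n>m}w_n$ telescopes to $M_m$, which is the claim about returns longer than $m$ steps. For $\tau\equiv1$ we have $M_n=\lambda^n$, which gives the geometric weights $w_n=(1-\lambda)\lambda^{n-1}$ and $w_H=\lambda^{H-1}$. Since both recursions then coincide line by line, $R^\tau_t=R^\lambda_t$, matching Cor.~\ref{cor:return}.

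I expect the main obstacle to be index bookkeeping rather than any real difficulty. The trust $\tau_{t+1}$ attaches to the transition out of $s_t$, so the step from $t$ to $t+1$ must pick up $\lambda\tau_{t+1}$, making the weight at depth $k$ equal to $\lambda^kC_k$ and not $\lambda^kC_{k+1}$. The boundary term at $n=H$ also needs care. I would verify both by checking $H=1$ and $H=2$ directly before writing out the general induction. The lemma is stated for every doubt profile and the argument is pathwise, so no expectation or fixed-profile assumption is needed.
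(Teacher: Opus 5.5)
Your proof is correct, and it reaches Eq.~\eqref{eq:unrolled} by a different bookkeeping from the paper's.

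The paper unrolls Eq.~\eqref{eq:return} directly into explicit coefficients on the raw quantities:
$\gamma^kM_k$ on each $r_k$, $\gamma^nM_{n-1}(1-\lambda\tau_n)$ on each $v(s_n)$ with $n<H$, and $\gamma^HM_{H-1}$ on $v(s_H)$. It then verifies that $\sum_nw_nR^{(n)}$ has the same coefficients, because $r_k$ collects $\sum_{n>k}w_n=M_k$ and $v(s_n)$ collects $w_n$. In effect it checks a proposed set of weights, using the tail identity along the way.

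You instead pass to the TD form $D_t=\delta^v_t+\gamma\lambda\tau_{t+1}D_{t+1}$ and get $R^\tau_0-v(s_0)=\sum_k\gamma^kM_k\delta^v_k$. Summation by parts over $R^{(k+1)}-R^{(k)}=\gamma^k\delta^v_k$ then produces the weights, so they are derived rather than checked. The irrelevance of $\tau_H$ also falls out of $D_H=0$, without a separate remark.

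Your intermediate identity is the backward recursion the paper reuses in Thm.~\ref{thm:guar}(i)--(ii). There, taking expectations with $v=\hat V^\pi$ or $v=V^\pi$ turns $\delta^v_k$ into $0$ or $\eta(s_k)$. So your route links the lemma to the theorem more directly. The paper's explicit reward/value form, in turn, is the one its contraction step reads off.

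One small slip: $\bar u_k\ge\varepsilon$ gives only $\tau_k\le1$. The nonnegativity $\tau_k\ge0$ comes from $\bar u_k\le1$, and it is needed for $M_{n-1}\ge0$, hence for $w_n\ge0$ and $w_H\ge0$.
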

\begin{proof}
Since $R^\tau_H=v(s_H)$, the last step reads $R^\tau_{H-1}=r_{H-1}+\gamma\,v(s_H)$ whatever $\tau_H$. Unrolling Eq.~\eqref{eq:return} from $t=0$,
\begin{equation*}
  R^\tau_0=\sum_{k=0}^{H-1}\gamma^kM_k\,r_k+\sum_{n=1}^{H-1}\gamma^nM_{n-1}(1-\lambda\tau_n)\,v(s_n)+\gamma^HM_{H-1}\,v(s_H).
\end{equation*}
In $\sum_nw_nR^{(n)}$ the reward $r_k$ collects $\sum_{n>k}w_n=M_k$ and $v(s_n)$ collects $w_n$, and $M_{n-1}(1-\lambda\tau_n)=M_{n-1}-M_n$, so the two agree. The weights are nonnegative as $\lambda\tau_n\le1$, telescope to $M_0=1$, and those beyond $n$ sum to $M_n$. With $\tau\equiv1$, $M_n=\lambda^n$.
\end{proof}
\begin{exbox}
\textbf{Example.} With $\lambda=0.95$, the weight of the eight-step return falls from $0.70$ under the $\lambda$-return to $0.26$, and the three- and four-step returns, which hand over to the critic at steps 3 and 4, gain $0.34$ and $0.18$ (Fig.~\ref{fig:trustdepth}b).
\end{exbox}
\begin{keybox}
\begin{thmrestated}[Contraction and bias bound, restated and extended]
For any fixed doubt profile: (i) the critic's update under Eq.~\eqref{eq:return} is a $\gamma$-contraction to $\hat V^{\pi}$, the value of $\pi$ in the imagined model, as under Eq.~\eqref{eq:lambda}; (ii) with the critic at the true value $V^{\pi}$ and $\delta_j=\sup\lvert(\hat T-T)V^{\pi}\rvert$ over the states reached after $j$ imagined steps, where $\hat T$ and $T$ are the Bellman operators of $\pi$ in the model and in the environment,
\begin{equation*}
  \big\lVert\,\mathbb E[R^{\tau}_0]-V^{\pi}\big\rVert_\infty\;\le\;\sum_{j=0}^{H-1}\gamma^{\,j}\lambda^{j}\,C_j\,\delta_j ;
\end{equation*}
(iii) when the trust varies with the state--action pair, (i) holds as it stands, and at each real state $s_0$, with $\eta=(\hat T-T)V^\pi$ the model's one-step error,
\begin{equation*}
  \big\lvert\,\mathbb E[R^{\tau}_0\mid s_0]-V^{\pi}(s_0)\big\rvert\;\le\;\sum_{j=0}^{H-1}\gamma^{\,j}\lambda^{j}\,\mathbb E\big[C_j\,\lvert\eta(s_j)\rvert\;\big|\;s_0\big].
\end{equation*}
\end{thmrestated}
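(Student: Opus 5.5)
The plan is to write the lucid return as a fixed linear operator acting on the critic, using Lemma~\ref{lem:unrolled}, and then handle each part separately.

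\textbf{Part (i).} Fix the doubt profile, so the weights $w_n=M_{n-1}-M_n$ and $w_H=M_{H-1}$ are constants. They are nonnegative, sum to one, and do not depend on $v$. The update target is then
\begin{equation*}
  (\mathcal T^\tau v)(s_0)=\sum_{n=1}^{H}w_n\,\big(\hat T^{\,n}v\big)(s_0).
\end{equation*}
This holds because, in expectation under the imagined model, the $n$-step return $R^{(n)}$ is $\hat T^{\,n}v$, with $\hat T$ the Bellman operator of $\pi$ in the model. Each $\hat T^{\,n}$ is a $\gamma^n$-contraction in $\lVert\cdot\rVert_\infty$. So $\mathcal T^\tau$ is a $\sum_nw_n\gamma^n\le\gamma$ contraction, and its unique fixed point is $\hat V^\pi$, since $\hat T^{\,n}\hat V^\pi=\hat V^\pi$ for every $n$. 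This is the same argument as for $\tau\equiv1$; only the weights change.

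For (iii), where the trust depends on the visited pair, the weights become random. I would keep the recursive form instead of passing to $\hat T^{\,n}$. Fix any two critics $v,v'$. The differences satisfy
\begin{equation*}
  \Delta_t=\gamma\big[(1-\lambda\tau_{t+1})(v-v')(s_{t+1})+\lambda\tau_{t+1}\Delta_{t+1}\big],\qquad \Delta_H=(v-v')(s_H).
\end{equation*}
Backward induction gives $\lvert\Delta_t\rvert\le\gamma\lVert v-v'\rVert_\infty$, because each bracket is a convex combination with coefficients in $[0,1]$. The fixed point is still $\hat V^\pi$: substituting $v=\hat V^\pi$ and taking the model expectation step by step collapses each bracket, since $\mathbb E[r_t+\gamma\hat V^\pi(s_{t+1})]=\hat V^\pi(s_t)$. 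The main obstacle is here. The trust $\tau_{t+1}$ is measurable in $(s_t,a_t)$ but not in $s_{t+1}$, so the expectation of $\tau_{t+1}\,(R_{t+1}-\hat V^\pi(s_{t+1}))$ must vanish by conditioning on $(s_t,a_t)$ first. I would check that this tower argument goes through. It does if the next-step error has conditional mean zero given $(s_{t+1})$ under the model, which holds by induction.

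\textbf{Parts (ii)/(iii), the bias bound.} Set $v=V^\pi$. I would telescope the error along depth: define $E_t=\mathbb E[R^\tau_t\mid s_t]-V^\pi(s_t)$. Using the recursion,
\begin{equation*}
  R^\tau_t-V^\pi(s_t)=\big[r_t+\gamma V^\pi(s_{t+1})-V^\pi(s_t)\big]+\gamma\lambda\tau_{t+1}\big[R^\tau_{t+1}-V^\pi(s_{t+1})\big].
\end{equation*}
The first bracket has model expectation $(\hat TV^\pi-TV^\pi)(s_t)=\eta(s_t)$, because $TV^\pi=V^\pi$. Unrolling from $t=0$ with the terminal error $R^\tau_H-V^\pi(s_H)=0$ gives
\begin{equation*}
  \mathbb E[R^\tau_0\mid s_0]-V^\pi(s_0)=\sum_j\gamma^j\lambda^j\,\mathbb E[C_j\,\eta(s_j)\mid s_0].
\end{equation*}
Here $C_j$ multiplies the depth-$j$ increment; it is determined by $(s_{j-1},a_{j-1})$ and earlier steps, so the tower property applies conditionally on the history up to $s_j$. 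Taking absolute values gives (iii). For a fixed profile, $C_j$ is deterministic and $\lvert\eta(s_j)\rvert\le\delta_j$, which gives (ii) after taking the supremum over $s_0$.
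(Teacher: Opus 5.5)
Your proof is correct and follows the paper's argument. The contraction comes from the weights of Lemma~\ref{lem:unrolled}, which are nonnegative, sum to one and do not depend on $v$. The fixed point and the bias unrolling both come from the backward recursion $R^\tau_t-v(s_t)=[r_t+\gamma v(s_{t+1})-v(s_t)]+\gamma\lambda\tau_{t+1}[R^\tau_{t+1}-v(s_{t+1})]$, using that $\tau_{t+1}$ is fixed before $s_{t+1}$ is drawn.

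The one cosmetic difference is in (i): you write the target as the mixture $\sum_n w_n\hat T^{\,n}v$ and read the fixed point off $\hat T^{\,n}\hat V^\pi=\hat V^\pi$. The paper instead uses the same recursion it needs for (ii) and (iii), which you also fall back on once the weights become random.
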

\end{keybox}
\begin{proof}
\textit{Idea.} By Lemma~\ref{lem:unrolled} the target is a mixture of $n$-step returns with weights set by the head; each part then follows from one backward recursion.

\textit{(i).} Let $(\mathcal Rv)(s_0)=\mathbb E[R^\tau_0\mid s_0]$, over imagined rollouts from $s_0$ under $\pi$ and the model. By Lemma~\ref{lem:unrolled}, $R^\tau_0=\sum_{k=0}^{H-1}\gamma^kM_kr_k+\sum_{n=1}^{H}\gamma^nw_n\,v(s_n)$, so
\begin{equation*}
  \lvert\mathcal Rv-\mathcal Rv'\rvert(s_0)\;\le\;\mathbb E\sum_{n=1}^{H}\gamma^nw_n\,\lvert v-v'\rvert(s_n)\;\le\;\gamma\,\lVert v-v'\rVert_\infty ,
\end{equation*}
as $\gamma^n\le\gamma$ and the weights sum to one. For the fixed point, Eq.~\eqref{eq:return} reads $R^\tau_t=r_t+\gamma v(s_{t+1})+\gamma\lambda\tau_{t+1}\big(R^\tau_{t+1}-v(s_{t+1})\big)$. Let $v=\hat V^\pi$ and $\zeta_t=\mathbb E[R^\tau_t\mid s_t]-\hat V^\pi(s_t)$, with $\zeta_H=0$. As $r_t+\gamma\hat V^\pi(s_{t+1})$ has mean $\hat V^\pi(s_t)$ under the model, $\zeta_t=\gamma\lambda\tau_{t+1}\,\mathbb E[\zeta_{t+1}\mid s_t]$, so every $\zeta_t$ vanishes: $\hat V^\pi$ is a fixed point, and by the contraction the only one.

\textit{(ii).} Let $v=V^\pi$ and $\zeta_t=\mathbb E[R^\tau_t\mid s_t]-V^\pi(s_t)$. Now $r_t+\gamma V^\pi(s_{t+1})$ has mean $(\hat TV^\pi)(s_t)=V^\pi(s_t)+\eta(s_t)$, since $TV^\pi=V^\pi$. So $\zeta_t=\eta(s_t)+\gamma\lambda\tau_{t+1}\,\mathbb E[\zeta_{t+1}\mid s_t]$, which unrolls to $\zeta_0=\sum_{j=0}^{H-1}\gamma^jM_j\,\mathbb E[\eta(s_j)\mid s_0]$, and $\lvert\mathbb E[\eta(s_j)\mid s_0]\rvert\le\delta_j$.

\textit{(iii).} Both parts use only that the trust of a step is known at the state and action it leaves. The weights of Lemma~\ref{lem:unrolled} stay nonnegative and sum to one on every rollout, so (i) holds; with $\tau_{t+1}$ inside the expectation, (ii) unrolls to $\zeta_0=\sum_{j}\gamma^j\lambda^j\,\mathbb E[C_j\,\eta(s_j)\mid s_0]$, which gives the bound.
\end{proof}

\needfit{16\baselineskip}
\subsection{When trust tightens the bound}\label{app:tighter}
With a critic off the true value, trust also sets how much of the return rests on the critic.
\begin{keybox}
\begin{proposition}[When lowering trust tightens the bound]\label{prop:critic}
Let the critic err by $\Delta_v=\lVert v-V^{\pi}\rVert_\infty$. For any fixed doubt profile:\\
(i) the bias is bounded by
\begin{equation*}
  \big\lVert\,\mathbb E[R^{\tau}_0]-V^{\pi}\big\rVert_\infty\;\le\;\gamma\Delta_v+\delta_0+\sum_{j=1}^{H-1}\gamma^jM_j\big(\delta_j-(1-\gamma)\Delta_v\big),
\end{equation*}
and lowering the trust $\tau_k$ of one step, $1\le k\le H-1$, with $C_k>0$, lowers this bound exactly when the mean of $\delta_j$ over $j\ge k$, weighted by $\gamma^jM_j$, exceeds $(1-\gamma)\Delta_v$;\\
(ii) with the critic at the true value, $\Delta_v=0$, this is the bound $B(C)=\sum_{j=0}^{H-1}\gamma^{j}\lambda^{j}C_j\delta_j$ of Eq.~\eqref{eq:bias}, and against the standard return, $\tau\equiv1$, $B(\mathbf 1)-B(C)=\sum_{j=0}^{H-1}\gamma^{j}\lambda^{j}(1-C_j)\,\delta_j\ge0$: the error at each depth is discounted by the trust lost before it.
\end{proposition}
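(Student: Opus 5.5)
The plan reuses the backward recursion from the proof of Thm.~\ref{thm:guar}(ii), now with a critic off the true value. Set $\zeta_t=\mathbb E[R^\tau_t\mid s_t]-V^\pi(s_t)$ and $\epsilon_v=v-V^\pi$, so that $\lVert\epsilon_v\rVert_\infty=\Delta_v$. Eq.~\eqref{eq:return} reads $R^\tau_t=r_t+\gamma v(s_{t+1})+\gamma\lambda\tau_{t+1}\big(R^\tau_{t+1}-v(s_{t+1})\big)$. Taking expectations and using $TV^\pi=V^\pi$ gives
\begin{equation*}
\zeta_t=\eta(s_t)+\gamma(1-\lambda\tau_{t+1})\,\mathbb E[\epsilon_v(s_{t+1})\mid s_t]+\gamma\lambda\tau_{t+1}\,\mathbb E[\zeta_{t+1}\mid s_t],
\end{equation*}
with terminal value $\zeta_H=\epsilon_v(s_H)$. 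Since the profile is fixed, the $\tau$'s are constants, and unrolling gives
\begin{equation*}
\zeta_0=\sum_{j=0}^{H-1}\gamma^jM_j\,\mathbb E\eta(s_j)+\sum_{n=1}^{H}\gamma^n w_n\,\mathbb E\epsilon_v(s_n),
\end{equation*}
with the weights $w_n$ of Lemma~\ref{lem:unrolled}. Taking absolute values and sup norms yields $\sum_j\gamma^jM_j\delta_j+\Delta_v\sum_{n=1}^H\gamma^nw_n$.

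Next I rewrite the critic term by summation by parts. With $w_n=M_{n-1}-M_n$ for $n<H$ and $w_H=M_{H-1}$,
\begin{equation*}
\sum_{n=1}^H\gamma^nw_n=\gamma M_0-\sum_{j=1}^{H-1}M_j(\gamma^j-\gamma^{j+1})=\gamma-\sum_{j=1}^{H-1}\gamma^jM_j(1-\gamma).
\end{equation*}
The terminal term cancels: $\gamma^HM_{H-1}$ appears as $+\gamma^Hw_H$ and as $-\gamma^{H}M_{H-1}$ from the last difference. One index check is still needed here, namely that $M_j$ for $j\le H-1$ carries exactly $\gamma^j-\gamma^{j+1}$. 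Adding back the $j=0$ model term $\delta_0$ (with $M_0=1$) gives the bound in (i).

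For the monotonicity claim, note that for $j\ge k$ we have $M_j=\lambda^j\prod_{l\le j}\tau_l$, which is linear in $\tau_k$ with $\partial M_j/\partial\tau_k=M_j/\tau_k$, and $M_j$ does not depend on $\tau_k$ for $j<k$. Hence the bound is affine in $\tau_k$ with slope $\tau_k^{-1}\sum_{j\ge k}\gamma^jM_j\big(\delta_j-(1-\gamma)\Delta_v\big)$. This slope has the sign of the $\gamma^jM_j$-weighted mean of $\delta_j$ over $j\ge k$ minus $(1-\gamma)\Delta_v$; that needs $M_k>0$, which holds because $C_k>0$. Lowering $\tau_k$ then lowers the bound exactly when the slope is positive. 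The main obstacle is the case $\tau_k=0$. There I would argue via the derivative $\partial M_j/\partial\tau_k=\lambda^j\prod_{l\le j,\,l\ne k}\tau_l$ directly, whose sign factor is shared across $j\ge k$. The statement's hypothesis $C_k>0$ sidesteps this case.

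Part (ii) is a direct specialisation. Setting $\Delta_v=0$ reduces (i) to $\delta_0+\sum_{j\ge1}\gamma^jM_j\delta_j=B(C)$, which matches Eq.~\eqref{eq:bias}. With $\tau\equiv1$ we have $C_j=1$, so subtracting gives $B(\mathbf 1)-B(C)=\sum_j\gamma^j\lambda^j(1-C_j)\delta_j$. This is nonnegative because $\tau_k\in[0,1]$ forces $C_j\le1$ and $\delta_j\ge0$.
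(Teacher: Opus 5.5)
Your proposal is correct and follows the paper's proof. Your single backward recursion with $\epsilon_v=v-V^\pi$ reproduces the paper's split into the model-error sum of Thm.~\ref{thm:guar}(ii) plus the critic correction $\sum_{n=1}^{H}\gamma^n w_n\,\epsilon_v(s_n)$ from Lemma~\ref{lem:unrolled}, and the summation by parts, the affine dependence on $\tau_k$ through $M_j$ for $j\ge k$, and the specialisation to $\Delta_v=0$ are the same as the paper's. One cosmetic correction: $\gamma^H w_H=\gamma^H M_{H-1}$ does not cancel; it combines with the $-\gamma^{H-1}M_{H-1}$ from the last difference to form the $j=H-1$ summand $-(\gamma^{H-1}-\gamma^{H})M_{H-1}$, which your displayed identity already gets right.
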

\end{keybox}
\begin{proof}
(i) By Lemma~\ref{lem:unrolled}, replacing $V^\pi$ by $v$ changes $R^\tau_0$ by $\sum_{n=1}^{H}\gamma^nw_n\,(v-V^\pi)(s_n)$, at most $\Delta_v\sum_{n=1}^{H}\gamma^nw_n$ in size, and summation by parts gives $\sum_{n=1}^{H}\gamma^nw_n=\gamma-(1-\gamma)\sum_{j=1}^{H-1}\gamma^jM_j$. Adding Thm.~\ref{thm:guar}(ii) gives the bound. Every $M_j$ with $j\ge k$ is proportional to $\tau_k$ and no other term depends on it, so the bound is linear in $\tau_k$, whose slope has the sign of $\sum_{j\ge k}\gamma^jM_j\big(\delta_j-(1-\gamma)\Delta_v\big)$. (ii) Subtract the two sums; as $\bar u\ge\varepsilon$, $\tau\le1$, so $C_j\le1$, and $\delta_j\ge0$.
\end{proof}
\begin{exbox}
\textbf{Example.} In the rollout, errors at depths 0 to 2, up to the first doubtful transition, keep their factors $M_j=1,\,0.95,\,0.90$ in the bound. Errors from depth 3 on, which follow the doubtful steps, enter with $M_3=0.52$ down to $M_7=0.26$, where the $\lambda$-return has $\lambda^j$, $0.86$ down to $0.70$.
\end{exbox}
\paragraph{In words.} Every $M_j$ with $j\ge k$ carries $\tau_k$, so one doubtful step discounts every model error that follows it and none before it; a smaller $\lambda$ would discount every depth alike. As the doubt rises where the rollout leaves experience (Sec.~\ref{sec:experience}), these are the errors made after it has left what the model knows. Prop.~\ref{prop:critic}(i) says where the discount pays when the critic errs: where the model's one-step error beyond a step exceeds $(1-\gamma)\Delta_v$, the critic's error per step. Which imagined steps the trust actually removes is measured in Sec.~\ref{sec:imagination}.

\subsection{Doubt ranks actions before acting}\label{app:ordering}
\begin{keybox}
\begin{proposition}[Doubt before acting]\label{prop:acting}
The doubt $u(s_t,a)$ is computed by the head from $(s_t,a)$ alone, so it is known for every candidate action before any is taken. Above the floor $\varepsilon$, $u(s_t,a)>u(s_t,a')$ exactly when $S(s_t,a)<S(s_t,a')$, for every prior weight $W>0$.
\end{proposition}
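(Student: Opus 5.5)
The proof has two parts, and both come straight from the definitions in Sec.~\ref{sec:doubt}. First I will establish that the doubt is a function of $(s_t,a)$ alone. The lucid head, Eq.~\eqref{eq:head}, computes the logits $\ell(s_t,a)$ from the current latent state and a candidate action only. The evidence $e=\mathrm{softplus}\,\ell(s_t,a)$, its total $S(s_t,a)=\sum_ie_i$ and the doubt $u(s_t,a)=\max\{W/(W+S(s_t,a)),\,\varepsilon\}$ are all functions of those logits. None of them involves the next observation $x_{t+1}$, the posterior of Eq.~\eqref{eq:fusion}, or the environment's response. Since $s_t$ is available at decision time, one evaluation of the head per candidate $a$ gives its doubt before any action is executed. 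I will also note that averaging over the $G$ variables to form $\bar u_{t+1}$ keeps this property.

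Second, I will prove the ordering. "Above the floor $\varepsilon$" means both doubts take the unfloored branch, $u=W/(W+S)>\varepsilon$. I will set $f_W(S)=W/(W+S)$ on $S\ge0$. For any $W>0$ its derivative is $-W/(W+S)^2<0$, so $f_W$ is strictly decreasing, as App.~\ref{app:dirichlet} already observes. A strictly decreasing function reverses strict order in both directions, so $u(s_t,a)>u(s_t,a')$ if and only if $S(s_t,a)<S(s_t,a')$. The comparison depends only on the sign of $f_W$'s slope, not on the value of $W$, which gives the claim for every prior weight.

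The only point needing care is the role of the floor. If one candidate sits on the floor, both directions of the equivalence can fail: two totals above $W(1-\varepsilon)/\varepsilon$ give equal doubt $\varepsilon$. I will therefore state explicitly that the equivalence is claimed only when both $u$ values exceed $\varepsilon$. In the mixed case, where one doubt is floored and the other is not, the unfloored one is larger, and its total is below $W(1-\varepsilon)/\varepsilon$ while the floored one's total is at or above it, so the equivalence still holds there. Beyond this case distinction I expect no real obstacle.
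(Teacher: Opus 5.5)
Your proof is correct and takes the same approach as the paper's. The paper likewise argues that $e=\mathrm{softplus}\,\ell(s_t,a)$ needs no observation of the outcome and that $W/(W+S)$ falls strictly in $S$ for every $W>0$; your treatment of the floor goes beyond what the paper gives.

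One small correction to that floor remark: since $\max\{W/(W+S),\varepsilon\}$ is non-increasing in $S$, the direction $u(s_t,a)>u(s_t,a')\Rightarrow S(s_t,a)<S(s_t,a')$ always holds. Only the converse can fail, and only when both candidates sit on the floor, not when just one does, as your own mixed-case check confirms.
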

\end{keybox}
\begin{proof}
The evidence $e=\mathrm{softplus}\,\ell(s_t,a)$ needs no observation of the outcome, and $u=W/(W+S)$ falls strictly with $S$ for every $W>0$.
\end{proof}
What acting on trust changes in the agent's decisions is measured in Sec.~\ref{sec:decision}.

\FloatBarrier
\appgroup{Experiments}
\section{Experimental details}\label{app:exp}
LucidWM is compared with seventeen readouts on four bases and three environment families (Table~\ref{tab:ledger}). This appendix gives the bases, readouts and protocols; App.~\ref{app:further} gives the results beyond the main text.

\subsection{Protocols}\label{app:protocol}
\textbf{Maze.} The agent trains on a ViZDoom map with one room sealed off. The \emph{opened} map unseals it; the \emph{repainted} map keeps every wall in place and changes the paint of one room. The repainted dead end of Sec.~\ref{sec:decision} keeps every wall and repaints the deep half of one dead end. A walk is a scripted route that reads the agent's position only, never the model, so that every readout sees the same frames. Learning to doubt leaves the task intact: every trained agent, with LucidWM and without, reaches the goal in all 100 test episodes.

\textbf{Corrupted actions.} From real start states the model imagines forward while each action is replaced, with a set probability, by a random one drawn uniformly from the action space. The lift of a readout is its value under fully corrupted actions over its value under the true ones.

\textbf{Alarm line.} Each readout is set against its own alarm line, taken on familiar frames of the same run; Figs.~\ref{fig:walkin}, \ref{fig:falls}, \ref{fig:night} and~\ref{fig:fourbases} show each readout in alarm units, $0$ at its familiar level and $1$ at its line.

\subsection{Four bases}\label{app:results}
Table~\ref{tab:bases} lists the four bases.
\begin{table}[H]
  \caption{The four bases; $\varepsilon=0.01$ and $\lambda=0.95$ on all. $G\times K$: categorical variables and classes of the latent state; $W$: prior weight; $H$: imagination horizon.}
  \label{tab:bases}
  \centering\footnotesize\setlength{\tabcolsep}{6pt}\renewcommand{\arraystretch}{1.167}
  \begin{tabular*}{\linewidth}{@{\extracolsep{\fill}}llccc@{}}
    \toprule
    base & dynamics & $G\times K$ & $W$ & $H$\\
    \midrule
    DreamerV3 & recurrent, pixel decoder & $32\times32$ & 2 & 15\\
    R2-Dreamer & recurrent, no reconstruction & $32\times16$ & 1 & 15\\
    EMERALD & masked transformer, spatial & $4\times4$ cells of $32\times32$ & 2 & 15\\
    OC-STORM & transformer & $32\times32$ & 2 & 16\\
    \bottomrule
  \end{tabular*}
\end{table}

\subsection{Every experiment at a glance}\label{app:ledger}
\par{\footnotesize\setlength{\tabcolsep}{3pt}\renewcommand{\arraystretch}{1.167}\setlength{\LTpre}{\intextsep}\setlength{\LTpost}{\intextsep}\setlength{\LTcapwidth}{\linewidth}\captionsetup{font=normalsize}
\begin{longtable}{@{}>{\raggedright\arraybackslash}p{3.77cm}>{\raggedright\arraybackslash}p{2.61cm}>{\raggedright\arraybackslash}p{2.85cm}>{\raggedright\arraybackslash}p{5.29cm}@{}}
\caption{Every experiment in the paper and what it finds. DMC in Table~\ref{tab:main}(B): walker, cheetah, pendulum.}\label{tab:ledger}\\
\toprule
experiment & bases & environments & result\\
\midrule
\endfirsthead
\toprule
experiment & bases & environments & result\\
\midrule
\endhead
\bottomrule
\endlastfoot
    \multicolumn{4}{@{}l}{\emph{main text}}\\*
    after observation, Tab.~\ref{tab:main}(A) & all four & maze & first on every base\\
    before decision, Tab.~\ref{tab:main}(B) & all four & DMC, Crafter & first on all 16 rows\\
    leaving experience, Fig.~\ref{fig:cloud} & DreamerV3 & walker & 9 in 10 over the line; base flat\\
    two changed rooms, Fig.~\ref{fig:maze} & DreamerV3 & maze & alarm in both; base in neither\\
    learning, Fig.~\ref{fig:growth} & DreamerV3 & cheetah & doubt falls with the error\\
    trust as a filter, Fig.~\ref{fig:riskcov} & DreamerV3 & walker, cheetah, finger & gain within depth on all three, up to 23\%\\
    acting on trust, Fig.~\ref{fig:decision} & DreamerV3 & maze & goal at step 190; base at 362\\
    \addlinespace[2pt]
    \multicolumn{4}{@{}l}{\emph{appendix}}\\*
    new room, Fig.~\ref{fig:walkin} & DreamerV3 & maze & 25/29 walks; base 0/29\\
    one pass vs.\ three models, Fig.~\ref{fig:ensemble} & DreamerV3 & maze & 8/10 walks; ensembles 0/10\\
    falls, Fig.~\ref{fig:falls} & DreamerV3 & cheetah & 20/20 before onset; entropy 0/20\\
    nightfall, Fig.~\ref{fig:night} & DreamerV3 & Crafter & peak $2.6\times$ its line; base $<0.5\times$\\
    every base, Fig.~\ref{fig:fourbases} & all four & maze & alarm on all four bases\\
    corruption ladder, Fig.~\ref{fig:corruption} & OC-STORM, EMERALD & walker, cheetah, Crafter & over fivefold; beats 20-pass readouts\\
    any line, any checkpoint, Fig.~\ref{fig:robust} & DreamerV3 & maze & 28/28 rules, 38/38 checkpoints\\
    six checkpoints, Fig.~\ref{fig:growthall} & DreamerV3 & cheetah & doubt halved; error $24.6\times$ lower\\
    dream and reality, Fig.~\ref{fig:dreams} & DreamerV3 & walker, Crafter & trust 0.16 against 0.42 when calm\\
    trust over ten runs, Fig.~\ref{fig:trust10}(a) & DreamerV3 & four DMC tasks & 11\% within depth, first of six\\
    hyperparameters, Fig.~\ref{fig:trust10}(b,\,c) & DreamerV3 & cheetah, walker & whole rollout best; return flat\\
    choosing by trust, Fig.~\ref{fig:crafter}(a) & DreamerV3 & Crafter & deaths/1k steps 3.80 against 4.08\\
    learning Crafter, Fig.~\ref{fig:crafter}(b,\,c) & DreamerV3 & Crafter & score 15.6\% against 9.0\%\\
    where vetoes fire, Fig.~\ref{fig:vetomap} & DreamerV3 & maze & 143/145 in a changed room\\*
    \midrule
    \multicolumn{4}{@{}l}{\emph{total:} 210 compared cells in Table~\ref{tab:main} $\cdot$ 4 bases $\cdot$ 3 environment families, 7 tasks $\cdot$ 17 readouts}\\
\end{longtable}}

\subsection{Seventeen readouts and their cost}\label{app:opponents}
Every readout is read on the same frames and imagined futures, and oriented so that higher means less familiar (Table~\ref{tab:readouts}). The fitted heads and multiple forwards are RND \citep{burda2019rnd}, an evidential head \citep{sensoy2018edl}, latent disagreement \citep{sekar2020plan2explore}, the Mahalanobis distance \citep{lee2018mahalanobis}, deep nearest neighbours \citep{sun2022knn}, deep ensembles \citep{lakshminarayanan2017ensembles}, MC dropout \citep{gal2016dropout}, a Laplace posterior \citep{daxberger2021laplace} and snapshot ensembles \citep{huang2017snapshot}; \emph{self}, three samples of one model, is our control for \emph{deep}.
\par{\footnotesize\setlength{\tabcolsep}{4pt}\renewcommand{\arraystretch}{1.167}\setlength{\LTpre}{\intextsep}\setlength{\LTpost}{\intextsep}\setlength{\LTcapwidth}{\linewidth}\captionsetup{font=normalsize}
\begin{xltabular}{\linewidth}{@{}ll>{\raggedright\arraybackslash}Xl@{}}
\caption{The readouts compared in Table~\ref{tab:main}, as defined on DreamerV3; the other bases read them from their own features. Cost as in Table~\ref{tab:main}, for tests (A) / (B).}\label{tab:readouts}\\
\toprule
family & readout & what it reads & cost (A / B)\\
\midrule
\endfirsthead
\toprule
family & readout & what it reads & cost (A / B)\\
\midrule
\endhead
\bottomrule
\endlastfoot
    free & base & our doubt, read from the logits of the unmodified base & $1\times$\\*
     & entropy & entropy of the latent distribution & $1\times$\\*
     & max p. & one minus its largest probability & $1\times$\\*
     & KL & divergence of the posterior from the prediction & $1\times$\\*
     & recon. & reconstruction error of the arriving frame & $1\times$\\*
     & 1-step & error of the arriving frame decoded from the prediction & $1\times$\\
    \addlinespace[2pt]
    fitted heads & RND & error against a fixed random network, on the model state & $+.18$ / $+.13$\\*
     & RND-e & the same, on the encoder embedding & $+.38$\\*
     & evid. & vacuity of an evidential head trained on the model state & $+.12$ / $+.10$\\*
     & latent & disagreement of five transition heads & $+.62$ / $+.52$\\*
     & Mahal. & Mahalanobis distance to the training states & $+.04$\\*
     & kNN & distance to the fiftieth nearest training state & $+2.4$\\
    \addlinespace[2pt]
    multiple forwards & self & variance of three sampled predictions of one model & 3 fwd\\*
     & deep & disagreement of three independently trained models & $3\times$\\*
     & MC drop & variance over twenty dropout masks on the transition head & 20 fwd\\*
     & Laplace & variance over twenty weight samples of a Laplace posterior & 20 fwd\\*
     & snap. & disagreement of four earlier snapshots of the model & 4 fwd\\
\end{xltabular}}

\FloatBarrier
\section{Further results}\label{app:further}
\setlength{\intextsep}{10pt plus 2pt minus 2pt}

\subsection{The alarm fires in the new room}\label{app:newroom}
\begin{figure}[H]
  \centering
  \includegraphics[scale=0.937]{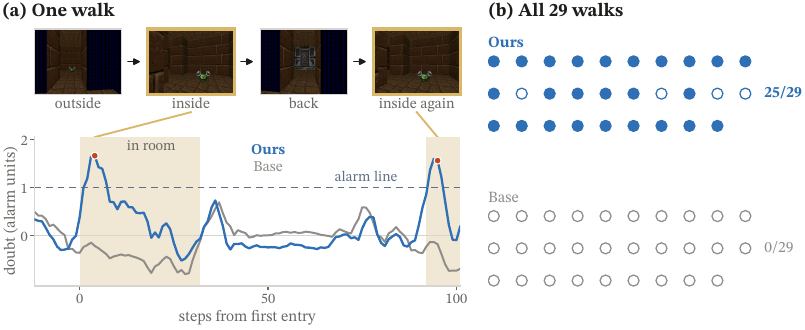}
  \caption{\textbf{Into the new room, and into it again.} (a)~One walk: views outside, inside (gold), back and inside again, and the doubt of Ours and Base in alarm units; sand: in the room; red: alarms. (b)~Walks on which each readout raises its alarm inside the room.}
  \label{fig:walkin}
\end{figure}
The room sealed in training shows familiar walls in a layout the agent has never experienced. The doubt raises its alarm inside it on 25 of 29 walks and the base's readout on none (Fig.~\ref{fig:walkin}); along one walk, it peaks each time the agent enters.
\begin{figure}[H]
  \centering
  \includegraphics[scale=0.937]{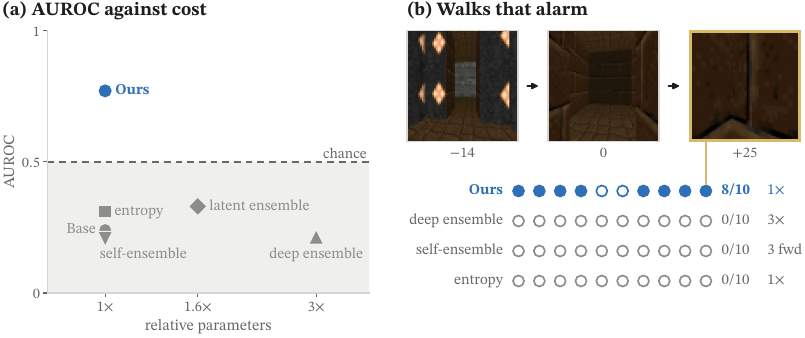}
  \caption{\textbf{One pass against three models.} (a)~AUROC of the new room against relative parameters, as in Table~\ref{tab:main}(A); grey band: below chance, where a readout reads the new room as more familiar than the maze. (b)~Walks through the opened door on which each readout raises its alarm, with its cost; top: walk~10 at $-14$, $0$ and $+25$ steps from entry, gold where Ours raises its alarm, joined to its dot.}
  \label{fig:ensemble}
\end{figure}
One pass does what three models cannot (Fig.~\ref{fig:ensemble}): at the cost of one forward pass the doubt ranks the new room above the maze, while the base's latent and deep ensembles, at up to three times the cost, read it as \emph{more} familiar; on ten further walks through the door, the doubt raises its alarm on eight, the deep and self-ensembles and the entropy on none.

\subsection{The alarm fires before the fall}\label{app:falls}
\begin{figure}[H]
  \centering
  \includegraphics[scale=0.937]{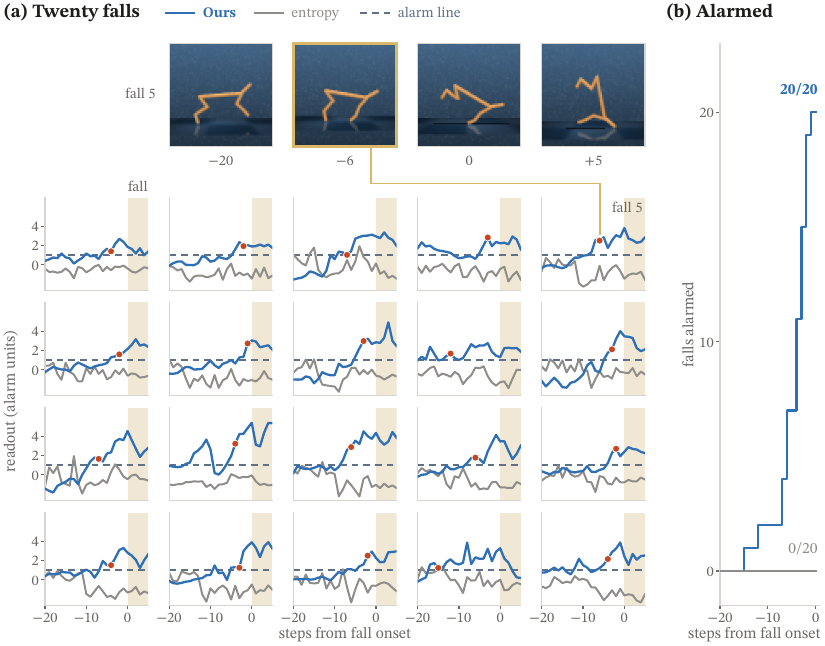}
  \caption{\textbf{Twenty falls, twenty alarms before the fall.} (a)~Doubt (blue) and the entropy of the same prediction (grey) around the onset of each of twenty consecutive falls, in alarm units; dashed: alarm line; red: onset of the doubt's alarm that holds into the fall; sand: the fall. (b)~Falls whose alarm has fired by each step. Top: fall~5, the fall of Fig.~\ref{fig:teaser}, at $-20$, $-6$, $0$ and $+5$ steps; gold: step $-6$, where its alarm holds; the doubt first crosses the line at $-8$, which Fig.~\ref{fig:teaser} counts as its lead.}
  \label{fig:falls}
\end{figure}
The doubt warns of each of twenty consecutive cheetah falls before it begins (Fig.~\ref{fig:falls}): its alarm fires a median of four steps ahead, while the entropy of the same prediction raises none.

\subsection{The alarm fires at nightfall}\label{app:night}
\begin{figure}[H]
  \centering
  \includegraphics[scale=0.937]{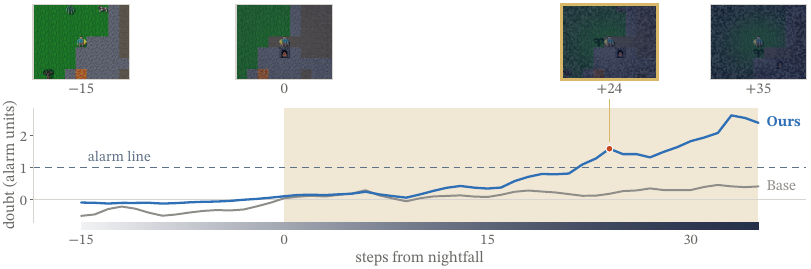}
  \caption{\textbf{Nightfall.} Views from fifteen steps before nightfall to thirty-five after (gold: at the alarm), above the doubt of Ours and Base in alarm units; sand: the night; strip: the fading light.}
  \label{fig:night}
\end{figure}
Nightfall darkens Crafter step by step. The doubt climbs over its line within twenty-five steps and peaks above twice it, while the base's readout never reaches half of its own (Fig.~\ref{fig:night}).

\subsection{The alarm fires on every base}\label{app:everybase}
\begin{figure}[H]
  \centering
  \includegraphics[scale=0.937]{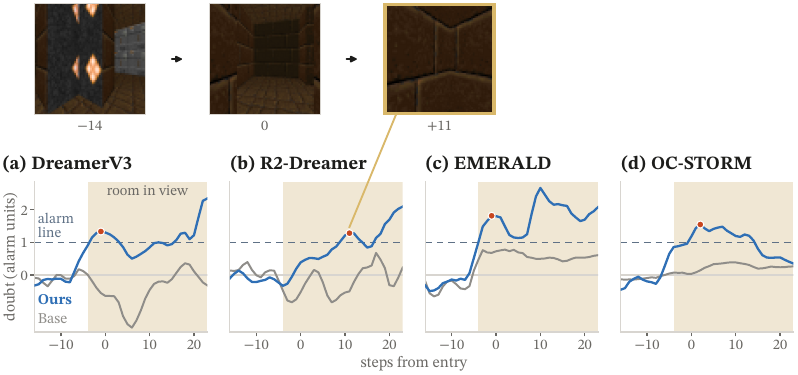}
  \caption{\textbf{Four bases, four alarms.} Doubt of Ours and Base on each base as the agent approaches the new room, in alarm units; sand: the room in view; red: the first alarm of Ours. Top: the route plotted in (b) and (d) at $-14$, $0$ and $+11$ steps from entry, gold at the alarm of (b), joined to its dot; (a) and (c) pass the same door.}
  \label{fig:fourbases}
\end{figure}
On all four bases the doubt crosses its line at the new room (Fig.~\ref{fig:fourbases}).

\subsection{The worse the actions, the higher the doubt}\label{app:ladder}
The more actions are corrupted, the higher the doubt climbs (Fig.~\ref{fig:corruption}). On OC-STORM its median over starts rises over fivefold on walker and over twofold on Crafter, while the base's readout barely moves. On EMERALD walker and cheetah the doubt rises over threefold from one pass, where MC dropout, Laplace and a snapshot ensemble, at up to twenty passes, fall instead.
\begin{figure}[H]
  \centering
  \includegraphics[scale=0.937]{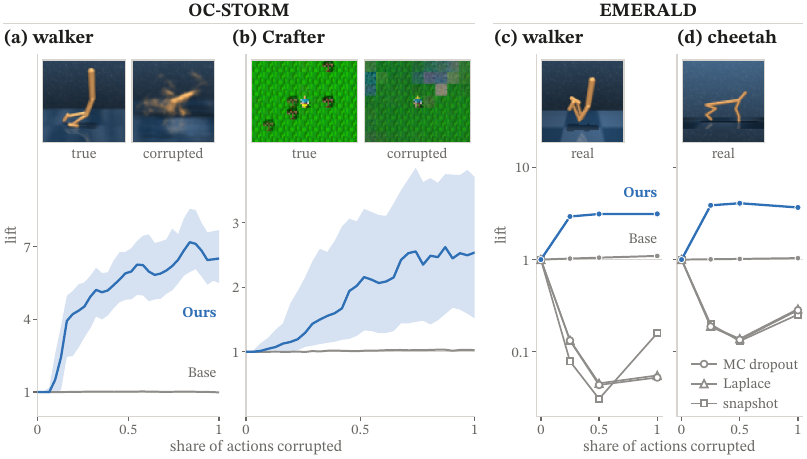}
  \caption{\textbf{The worse the actions, the higher the doubt.} Lift against the share of actions corrupted; thin line at one: no response. (a,\,b)~OC-STORM, median over 300 starts (band: interquartile range of Ours); insets: imagination under true and under corrupted actions. (c,\,d)~EMERALD against MC dropout, Laplace and a snapshot ensemble, on a log scale; insets: a real frame of each task.}
  \label{fig:corruption}
\end{figure}

\subsection{The alarm fires at any line, any checkpoint}\label{app:robust}
\begin{figure}[H]
  \centering
  \includegraphics[scale=0.937]{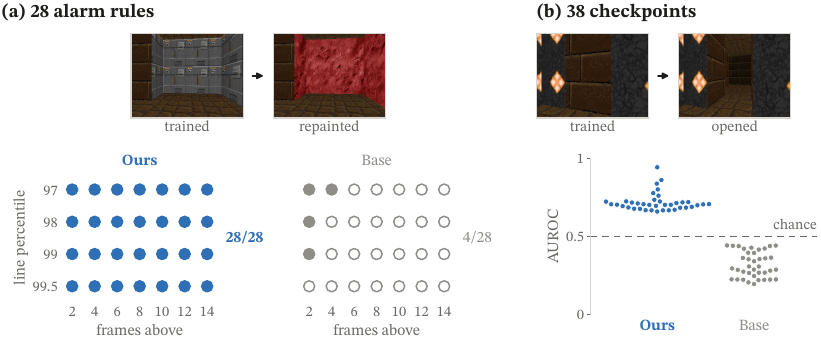}
  \caption{\textbf{Any alarm line, any checkpoint.} (a)~Rules, placement of the line (rows) by frames required above it (columns), under which each readout raises its alarm in the repainted room. (b)~AUROC of the opened room at each of 38 checkpoints of training; dashed: chance. Top: each change seen from one viewpoint, as trained and as changed.}
  \label{fig:robust}
\end{figure}
The alarm does not hinge on where its line is drawn or on when the model is read (Fig.~\ref{fig:robust}). In the repainted room the doubt raises its alarm under all 28 rules, four placements of the line by seven required durations, and the base's readout under only 4. Across 38 checkpoints of training, from $0.1$M to $3.8$M frames, the doubt ranks the frames of the opened room above the familiar ones at every checkpoint, and the base's readout ranks them below chance at every one.

\subsection{The doubt drains as the model learns}\label{app:growth}
At one step, and averaged over all 24 imagined steps, the doubt falls at every checkpoint of the run of Sec.~\ref{sec:calibration} (Fig.~\ref{fig:growthall}): from $0.194$ at 48k environment steps to $0.093$ at 180k, and from $0.181$ to $0.096$. Over the same span the error falls $24.6$-fold at one step and $3.9$-fold on average, while the entropy of the same prediction, averaged likewise, rises from $0.79$ to $1.01$.
\begin{figure}[H]
  \centering
  \includegraphics[scale=0.937]{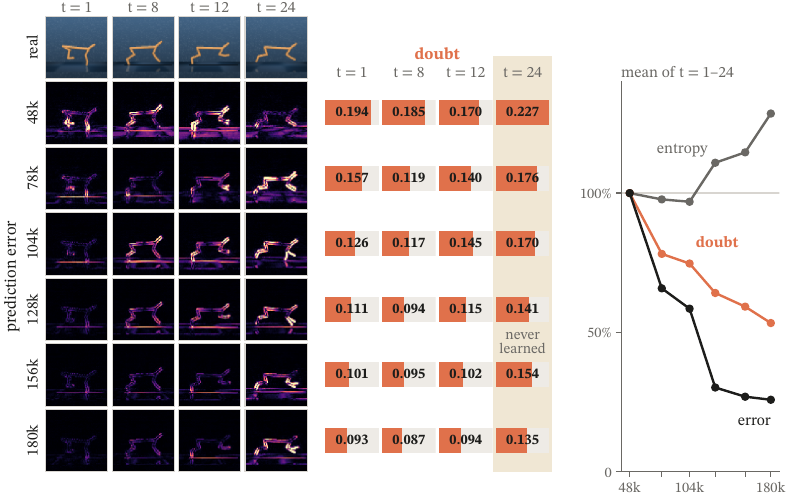}
  \caption{\textbf{Six checkpoints, one future.} As Fig.~\ref{fig:growth}, at all six checkpoints of the run. Right: error, doubt and the entropy of the same prediction, each averaged over the 24 steps, relative to the first checkpoint.}
  \label{fig:growthall}
\end{figure}

\subsection{Trust fades faster where the dream drifts}\label{app:filmstrip}
Fed one real action sequence, the world and the imagination part ways (Fig.~\ref{fig:dreams}). On walker the imagined walker falls while the real one walks on, and the pixel error between the two futures grows almost fivefold. On Crafter lava appears in the real world from the second step and never in the dream, and the trust of this rollout drains far faster than that of rollouts started in calm scenes: $0.16$ against $0.42$ at step 6.
\begin{figure}[H]
  \centering
  \includegraphics[scale=0.937]{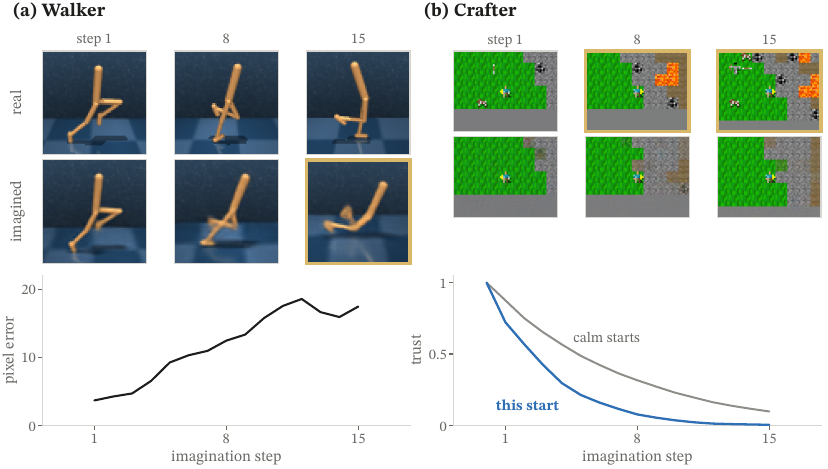}
  \caption{\textbf{Trust fades faster where the dream drifts.} Real (top) and imagined (bottom) futures under one action sequence. (a)~Walker; gold: the imagined walker has fallen; the pixel error between the two futures. (b)~Crafter; gold: lava in the real world; trust of this rollout against the median over thirty calm starts.}
  \label{fig:dreams}
\end{figure}

\subsection{Trust keeps its edge across ten runs}\label{app:trust10}
Within each depth, averaged over ten runs across walker, cheetah, finger and cartpole, trust removes 11.0\% of the error, more than any other readout of Sec.~\ref{sec:imagination} (Fig.~\ref{fig:trust10}a): trust compounded over two steps removes 9.3\%, the doubt of one step 6.8\%, the doubt of the posterior 4.2\%, entropy 3.6\% and depth alone 0.0\%. Trust removes error within depth on each of the ten runs.
\begin{figure}[H]
  \centering
  \includegraphics[scale=0.937]{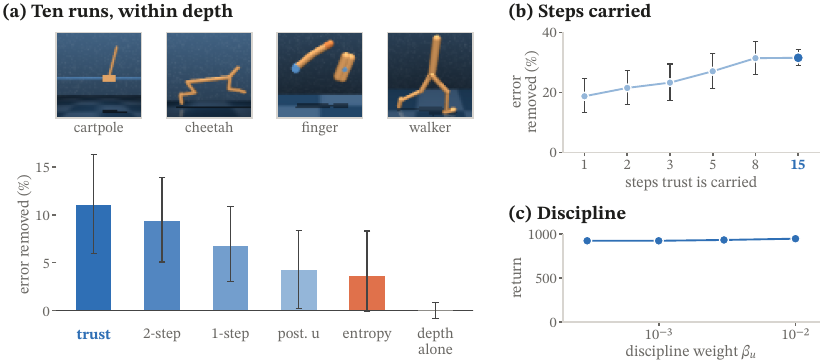}
  \caption{\textbf{Trust across ten runs, and two hyperparameters.} (a)~Error removed by dropping the worst fifth of imagined steps, ranked within each depth as in the solid bars of Fig.~\ref{fig:riskcov}, averaged over ten runs across walker, cheetah, finger and cartpole; top: the four tasks. (b)~The same, ranked over all depths on cheetah from pixels as in Fig.~\ref{fig:riskcov}b, with trust compounded over the last $w$ steps; $w=15$: the whole rollout, as in LucidWM. (c)~Return on walker against the discipline weight $\beta_u$. Whiskers: bootstrap 95\% intervals.}
  \label{fig:trust10}
\end{figure}

\subsection{Hyperparameter analysis}\label{app:window}
LucidWM carries trust through the whole rollout, and no shorter window $w$ does better (Fig.~\ref{fig:trust10}b): on cheetah from pixels, the task of Fig.~\ref{fig:riskcov}b, the error removed grows with every longer window, from 18.7\% at $w=1$, the doubt of one step, to 31.5\% at all 15 steps. The weight $\beta_u$ of the evidence discipline leaves the return in place (Fig.~\ref{fig:trust10}c): on walker, from $3\times10^{-4}$ to $10^{-2}$, it stays between 924 and 949.

\subsection{Choosing futures by trust cuts deaths}\label{app:choose}
\begin{figure}[H]
  \centering
  \includegraphics[scale=0.937]{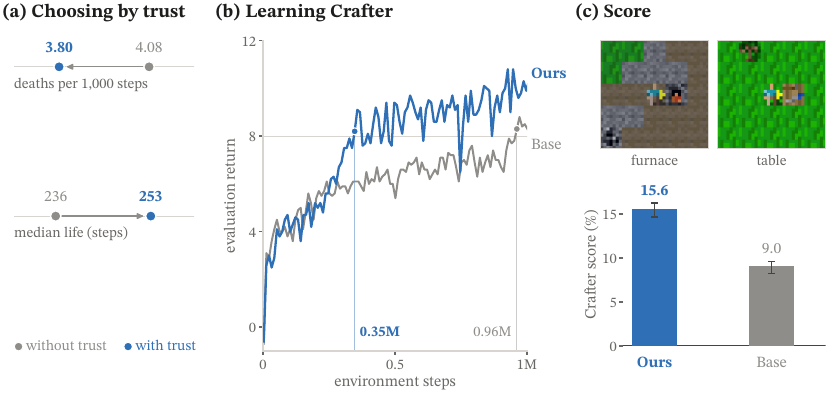}
  \caption{\textbf{Fewer deaths, faster learning.} (a)~One agent choosing among imagined futures scored with and without trust; median life: per seed, averaged over 50 seeds. (b)~Evaluation return over training; dots and lines: where each first reaches 8. (c)~The official Crafter score of the 22 achievements over the last 100 evaluation episodes; whiskers: bootstrap 95\% intervals over episodes; tiles: from the scored episodes, Ours placing a furnace and Base a table.}
  \label{fig:crafter}
\end{figure}
Used beyond the veto to choose among whole futures, trust keeps the agent alive longer (Fig.~\ref{fig:crafter}a). At every step in Crafter, LucidWM on DreamerV3 imagines eight futures of 15 steps with its own actor and takes the first action of the future that scores highest. With trust, a future is scored by a variant of $J$ (Eq.~\eqref{eq:gate}) that keeps $\lambda$, weighting step $k$ by $\lambda^{k}C_k$, the share of the lucid return that reaches it; without trust, by $\lambda^{k}$ alone. Both scores see the same futures, and trust changes the choice at 43\% of steps. Over 50 seeds of 1{,}000 steps, trust cuts deaths from 4.08 to 3.80 per 1{,}000 steps and lengthens the median life from 236 to 253 steps (paired Wilcoxon, $p=0.041$ and $0.0016$).

\subsection{LucidWM learns Crafter faster than its base}\label{app:crafterlearn}
The agent of App.~\ref{app:choose} learns Crafter faster than its base and scores higher (Fig.~\ref{fig:crafter}b,\,c). After 0.35M steps it reaches the evaluation return of 8 that its base first reaches after 0.96M, and over the last 100 evaluation episodes of training its Crafter score is 15.6\% against 9.0\%, with 11.0 achievements unlocked per episode against 8.9.

\subsection{Vetoes fire where the world changed}\label{app:vetoes}
\begin{figure}[H]
  \centering
  \includegraphics[scale=0.937]{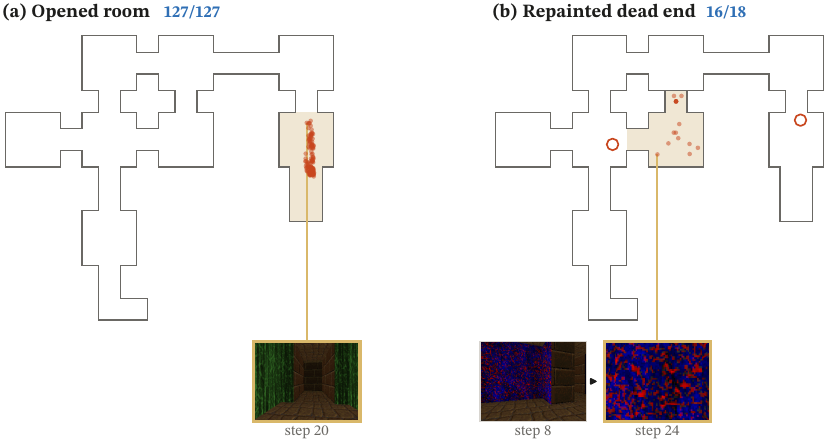}
  \caption{\textbf{Vetoes fire where the world changed.} Where every veto fired on (a)~the opened map and (b)~the repainted dead end; sand: the changed rooms; rings: vetoes outside them. Insets: the agent's view at a veto, gold and joined to its dot; in (b), also the repainted room at step~8. The veto at step~24 in (b) is that of Fig.~\ref{fig:decision}.}
  \label{fig:vetomap}
\end{figure}
The veto of Sec.~\ref{sec:decision} fires where the world has changed (Fig.~\ref{fig:vetomap}). On the two altered maps, 143 of its 145 vetoes fall inside a changed room: all 127 in the opened room and 16 of 18 in the repainted dead end.

\end{document}